\newcommand{\R}{\mathbb{R}}
\newcommand{\eps}{\varepsilon}
\newcommand{\vnote}[1]{\textcolor{red}{\small {\textbf{(Vishesh: }#1\textbf{) }}}}
\newcommand{\fnote}[1]{\textcolor{blue}{\small {\textbf{(Fred: }#1\textbf{) }}}}
\newtheorem{theorem}{Theorem}[section]
\newtheorem*{namedtheorem}{\theoremname}
\newcommand{\theoremname}{testing}
\newtheorem{thm}[theorem]{Theorem}
\newtheorem{lemma}[theorem]{Lemma}
\newtheorem*{question*}{Question}
\theoremstyle{definition}
\newtheorem{defn}[theorem]{Definition}
\newtheorem{remark}[theorem]{Remark}
\theoremstyle{plain}
\title{Approximating Partition Functions in Constant Time}
\author{Vishesh Jain\thanks{Massachusetts Institute of Technology. Department of Mathematics. Email: {\tt visheshj@mit.edu}} \and Frederic Koehler\thanks{Massachusetts Institute of Technology. Department of Mathematics. Email: {\tt fkoehler@mit.edu}} \and Elchanan Mossel\thanks{Massachusetts Institute of Technology. Department of Mathematics and IDSS. Supported by ONR grant N00014-16-1-2227   and 
NSF CCF-1665252 and DMS-1737944. Email: {\tt elmos@mit.edu} } }
\date{}
\begin{document}
\maketitle
% Skip title page in numbering
\thispagestyle{empty}
\setcounter{page}{0}

\begin{abstract}
We study approximations of the partition function of dense graphical models. 
Partition functions of graphical models play a fundamental role is statistical physics, in statistics and in machine learning. Two of the main methods for approximating the partition function are Markov Chain Monte Carlo and Variational Methods. An impressive body of work in mathematics, physics and theoretical computer science provides conditions under which Markov Chain Monte Carlo methods converge in polynomial time. These methods often lead to polynomial time approximation algorithms for the partition function in cases where the underlying model exhibits correlation decay. There are very few theoretical guarantees for the performance of variational methods. One exception is recent results by Risteski (2016) who considered {\em dense graphical models} and 
showed that using variational methods, it is possible to find an $O(\epsilon n)$  additive approximation to the {\em log partition function}
in time $n^{O(1/\epsilon^2)}$ even in a regime where correlation decay does not hold.

We show that under essentially the same conditions, an $O(\epsilon n)$  additive approximation of the log partition function can be found in {\em constant time}, independent of $n$. In particular, our results cover dense Ising and Potts models as well as dense graphical models with $k$-wise interaction. They also apply for low threshold rank models. 

To the best of our knowledge, our results are the first to give a constant time approximation to log partition functions and the first to use the algorithmic regularity lemma for estimating partition functions. As an application of our results we derive a constant time algorithm for approximating the magnetization of Ising and Potts model on dense graphs.

\end{abstract}

\newpage

\section{Introduction}
One of the major algorithmic tasks in the areas of Markov Chain Monte Carlo, in statistical inference and in machine learning is approximating partition functions of graphical models. The partition function or the approximate partition function are used in computing marginals and posteriors - two of the basic inference tasks in graphical models. Moreover, there is an intimate connection between computing the partition function and Markov Chain Monte Carlo methods. In particular, Jerrum and Sinclair~\cite{JerrumSinclair:89b} showed that it is possible to approximate the partition function for 
``self-reducible'' models for which a rapidly mixing Markov chain exists. On the other hand, for such models, a 
$(1+\eps)$ approximation of the partition function results in a rapidly mixing chain. 
Some key results in the theory of MCMC provide conditions for the existence of a rapidly 
mixing chains and therefore allow for efficient approximations of the partition functions e.g.~\cite{JerrumSinclair:89,JerrumSinclair:90,JeSiVi:04} and follow up work.  

A different line of work is based on the fact that 
the measure associated to any graphical model can be characterized by a variational problem. 
Thus, approximating the partition function can be achieved by solving optimization problems (e.g. \cite{blei-etal-variational}, \cite{wainwright-jordan-variational},  \cite{yedida-etal-bp}). Until recently, there were very few general guarantees for the performance of variational methods. One exception is results by Risteski~\cite{risteski-ising} who showed that using variational methods, it is possible to find an $O(\eps ||\vec{J}||_1)$  additive approximation to the {\em log partition function}  of
{\em dense graphical models} in time $n^{1/\eps^2}$, even in an interesting regime where correlation decay does not hold. Here, $||\vec{J}||_1$ denotes the total weight of all interactions of the graph so that we are guaranteed %a-priori that that log partition function which imply the 
an \emph{a priori} upper bound on the log partition function being $O(n + ||\vec{J}||_1)$. 

Following Risteski~\cite{risteski-ising}, we consider dense graphical models and show that it is possible to find an $O(\eps ||\vec{J}||_1)$  additive approximation to the {\em log partition function} in (randomized) {\em constant time}. Thus our main result stated informally is: 
\begin{thm} \label{thm:main_informal}
For dense graphical models, it possible to compute the log partition function with additive error 
$O(\eps ||\vec{J}||_1)$ in time that depends only on $\eps$ and the density parameter. 
\end{thm}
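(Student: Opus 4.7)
The plan is to combine three ingredients: a mean-field reduction of $\log Z$ to a product-measure variational problem (accurate for dense graphical models), an algorithmic weak-regularity decomposition of the interaction tensor $\vec{J}$, and constant-sample estimation of the resulting block statistics. First, I would invoke a mean-field approximation of the form
\[
\log Z \;=\; \max_{\mu = \mu_1 \otimes \cdots \otimes \mu_n} \Bigl\{ \sum_i H(\mu_i) \,+\, \bE_\mu\bigl[\langle \vec{J},\vec{x}\rangle\bigr] \Bigr\} \;\pm\; O\bigl(\eps \|\vec{J}\|_1\bigr),
\]
reducing the task to optimizing a concave-minus-multilinear objective over a product of single-site simplices. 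Such bounds are known for dense Ising/Potts via non-linear large deviations and extend to dense $k$-wise interactions; a spectral analogue handles the low threshold rank case.

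Second, I would apply the (tensor) Frieze--Kannan weak regularity lemma to $\vec{J}$ to obtain a cut decomposition
\[
\vec{J} \;=\; \sum_{t=1}^{T} c_t\, \mathbf{1}_{S_t^{(1)} \times \cdots \times S_t^{(k)}} \;+\; E, \qquad \|E\|_{\mathrm{cut}} \;\leq\; \eps \|\vec{J}\|_1,
\]
with $T$ depending only on $\eps$. Once $\vec{J}$ is replaced by this constant-complexity surrogate, the energy in the mean-field problem depends only on the average of $\mu_i$ within each atom of the common refinement of the $\{S_t^{(j)}\}$; a standard convexity argument (replacing each $\mu_i$ in an atom by the atom average can only increase the objective) forces the optimum to be attained by marginals constant on atoms. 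The mean-field optimization therefore collapses to a concave maximization over a vector lying in a simplex of constant dimension. To execute the whole procedure in time independent of $n$, I would use the sublinear, sampling-based form of weak regularity: a random sample of $q = q(\eps)$ vertices (with interactions queried on demand) suffices to produce the partition $\{S_t^{(j)}\}$ together with estimates of the atom volumes and the coefficients $c_t$ to the required accuracy. Plugging these estimates into the reduced concave program and solving it, e.g.\ by a grid search on the simplex, yields the approximation in time $\mathrm{poly}(1/\eps)$.

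The main obstacle is controlling the propagation of errors from the cut norm through the mean-field functional. Concretely, I need a Lipschitz bound of the form $|F(\vec{J}) - F(\vec{J}')| \lesssim \|\vec{J} - \vec{J}'\|_{\mathrm{cut}}$ for the mean-field value $F$; this is plausible because any optimal product measure effectively tests $\vec{J}$ against indicator functions on subsets of $[n]$, which are precisely the functions defining the cut norm, but the bookkeeping must be carried out carefully to handle the entropy terms and the $k$-wise tensor case. A secondary subtlety is that the sample-based reconstruction only estimates atom volumes and coefficients to additive $\eps$, and one must verify that these errors contribute only $O(\eps\|\vec{J}\|_1)$ to the objective after renormalization. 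With these pieces in place, the theorem follows by chaining the three approximations: mean-field reduction, cut decomposition, and sample-based recovery of the block parameters.
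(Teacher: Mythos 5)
Your proposal is essentially correct but reaches the result by a genuinely different route for the key ``entropy'' step. The paper never invokes a mean-field approximation theorem: after the Frieze--Kannan cut decomposition (which you also use), it works directly with the partition function of the surrogate model, observing that all configurations with the same spin profile on the atoms of the common refinement have the same energy, so $Z'$ collapses to a sum over $O(n^{2^{2s}})$ block profiles weighted by products of binomial coefficients; Stirling converts the binomials into entropy terms, and the log of the max summand approximates the log of the sum up to $O(2^{2s}\log n)$. You instead outsource the entropy--energy bookkeeping to the variational principle, citing the nonlinear-large-deviations fact that $\log Z$ equals the naive mean-field free energy up to $O(\eps\|\vec{J}\|_1)$ for dense models, and then use the (correct) Jensen argument that the optimal product measure can be taken constant on atoms. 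Both routes work; the paper's is elementary and self-contained, while yours imports a nontrivial external theorem (Chatterjee--Dembo-type bounds) but is conceptually cleaner and makes the ``Lipschitz in cut norm'' step you worry about trivial, since $|m^T W m|\le\|W\|_{\infty\mapsto 1}$ for $\|m\|_\infty\le 1$ --- this is the same computation as the paper's Lemma~\ref{applying-reg-lemma}.

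Two smaller points. First, the reduced optimization is \emph{not} a concave program: the entropy part is concave, but the energy $\sum_t d_t\bigl(\sum_{a}\cdots\bigr)\bigl(\sum_{b}\cdots\bigr)$ is an indefinite quadratic (the $d_t$ can have either sign). Your grid-search fallback rescues this, but note that gridding the $2^{2s}$-dimensional atom space gives a doubly-exponential dependence on $1/\eps^2\Delta$; the paper instead grids only the $2s$-dimensional space of block sums $(\overline{r},\overline{c})$ and, for each grid cell, solves the genuinely concave subproblem of maximizing entropy subject to linear block-sum constraints, recovering a $2^{\tilde O(1/\eps^2\Delta)}$ runtime. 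Second, the mean-field error bound is only $O(\eps\|\vec{J}\|_1 + \eps n)$ once $n$ is sufficiently large relative to $1/\eps$ and $1/\Delta$; this matches the paper's hypothesis $n\ge 2^{C/\eps^2\Delta}$ but should be stated explicitly rather than absorbed silently into the $O(\cdot)$.
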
  

We note that the approximation guarantee in Theorem~\ref{thm:main_informal} is rather weak. For most inference tasks, one is interested in good approximation of the partition function itself and our results (like those of~\cite{risteski-ising}) only provide an approximation of the partition function within a $\exp(\pm \epsilon n)$ multiplicative factor. Such an approximation is not useful for most inference and marginalization tasks. We note however that 
\begin{itemize}
\item It is easy to see that in constant time, it is impossible to get a sub-exponential approximation of the partition  function (see~\cref{thm-qualitative-tightness} for a formal statement). 

\item There are features of graphical models that are reflected by the log partition function. 
In particular, we show in~\cref{thm-approx-magnetization} how to utilize our results 
to obtain an $\eps$-approximation for the magnetization per-site for Ising models in constant time. Similar statements hold for other dense graphical models as well. The exact statement of ~\cref{thm-approx-magnetization} provides the magnetization for a model that has parameters close to the model we are interested in. We show that this is necessary as in constant time, it is impossible to break symmetry between phases. 
\end{itemize}

We note that our results hold for very general Markov random fields. On the other hand, even in the case of ferromagnetic Ising models, only polynomial time guarantees have been previously provided (see \cite{JerrumSinclair:90} and 
recent work~\cite{Barvinok:17,lss-deterministic}).  
%\fnote{TODO: somebody needs to finish this?}

%{\bf Fred: can you please add the ref and write this paragraph} 
We also note that our results can be thought as generalizing classical results
about the approximability of Max-Cut and similar problems on dense graphs -- in fact, the Max-Cut problem is just the special
case of computing the log partition function of an antiferromagnetic Ising model (i.e.
all non-zero entries of $J$ are negative) on a graph with equal edge weights
in the limit as $\|\vec J\|_1 \to \infty$ (so that the entropy
term becomes negligible). Note that in this case, the log-partition function is $\Omega(\|\vec J\|_1)$ (consider
choosing the $X_i$ according to independent Rademacher random variables)
so that an $\epsilon \|\vec J\|_1$-additive approximation actually gives a PTAS. The first PTAS for Max-Cut on dense graphs was given by Arora,
Karger and Karpinski \cite{akk} but ran in time $O(n^{\tilde O(1/\epsilon^2)})$; this was improved
to $O(2^{\tilde O(1/\epsilon^2)}$ by Frieze and Kannan in \cite{frieze-kannan-old}
using their weak regularity lemma, and remains essentially the fastest runtime
known (our algorithm matches this runtime). More recent works \cite{alon-etal-samplingCSP,mathieu-schudy} have shown that the \emph{sample complexity} of the problem -- the number of entries of the adjacency matrix which need to be probed -- is only $O(1/\epsilon^4)$. In our case, determining the correct sample complexity remains an interesting open problem (see \cref{tightness-section}). It is also interesting to note that in the optimization case, algorithms based on regularity and algorithms based on  convex programming hierarchies (see \cite{maxcut-lp,yoshida-zhou}) have the same dependence on $\epsilon$ (but not $n$) -- our result and \cite{risteski-ising} also have the same dependence on $\epsilon$ and $\Delta$, showing that this phenomena extends to counting-type problems as well, and suggesting that it will be difficult to beat a runtime of $2^{\tilde O(1/\epsilon^2\Delta)}$.
%Mention how classic $(1 + \epsilon)$-approximation to max-cut on dense graphs is special case where $|J_T| \to \infty$, etc.

\section{Overview of results}

\subsection{Notation and definitions}
We will consider Ising models where the spins are $\pm 1$ valued. The interactions can be both ferromagnetic and anti-ferromagnetic and of varying strengths. For simplicity, we primarily restrict ourselves to the case where there are no external fields, though it will be clear that our methods extend to that case as well (\cref{rmk-general-mrf}).  
\begin{defn} An \emph{Ising model} is
a probability distribution on the discrete cube $\{\pm1\}^n$ of the form
\[ \Pr[X = x] = \frac{1}{Z} \exp(\sum_{i,j} J_{i,j} x_i x_j) = \frac{1}{Z} \exp(x^T J x) \]
where the collection $\{J_{i,j}\}_{i,j\in\{1,\dots,n\}}$ are the entries of
an arbitrary real, symmetric matrix. 
The normalizing constant $Z=\sum_{x\in\{\pm1\}^{n}}\exp(\sum_{i,j=1}^{n}J_{i,j}x_{i}x_{j})$
is called the \emph{partition function }of the Ising model.
\end{defn}
We will often need to consider an $m\times n$ matrix $M$ as an $mn$-dimensional
vector (exactly \emph{how }we arrange the $mn$ entries of $M$ in
a single vector will not matter). When we do so, we will denote the
resulting vector by $\vec{M}$ for clarity. 
\begin{defn}
For a vector $v=(v_{1},\dots,v_{n})\in\R^{n}$ and $p\in[1,\infty)$,
the $L^{p}$ norm of $v$, denoted by $||v||_{p}$, is defined by
$||v||_{p}:=[|v_{1}|^{p}+\dots+|v_{n}|^{p}]^{1/p}$. We also
define $||v||_{\infty}:=\max_{i\in\{1,\dots,n\}}|v_{i}|$. 
\end{defn}
By viewing a matrix $M$ as a vector $\vec{M}$, these vector norms
will give rise to \emph{matrix norms }which we will use in the sequel.
The cases $p=1,2,\infty$ will be of particular importance to us. 

Another matrix norm that we will need arises from viewing an $m\times n$
matrix as a linear operator from $\R^{n}$ to $\R^{m}$ via matrix
multiplication. 
\begin{defn}
For an $m\times n$ matrix $M$, we define $||M||_{\infty\mapsto1}:=\sup_{x\in\R^{n},||x||_{\infty}\leq1}||Mx||_{1}$. 
\end{defn}

Following \cite{risteski-ising}, we will focus on $\Delta$-dense Ising models. 
\begin{defn}
An Ising model is \emph{$\Delta$-dense} if $\Delta ||\vec{J}||_{\infty} \le \frac{||\vec{J}||_1}{n^2}$.
\end{defn}

From a combinatorial perspective
$\Delta$-dense models are a generalization of dense graph models. 
From a statistical physics perspective, they generalize complete graph models such as the 
Curie-Weiss model and the SK model~\cite{SK:75}.

Our main results will provide algorithms which run in \emph{constant
time. } In order to provide such guarantees on problems with unbounded input size, we will work under the usual assumptions on the computational model for sub-linear algorithms (as in e.g. \cite{alon-etal-samplingCSP,frieze-kannan-matrix,indyk1999sublinear}). Thus, we can probe matrix entry $A(i,j)$ in $O(1)$ time. Also note that by the standard Chernoff bounds, it follows that for any set of vertices $V$ for which we can test membership in $O(1)$ time, we can also estimate $|V|/n$ to additive error $\epsilon$ w.h.p. in constant time using $\tilde{O}(1/\epsilon^2)$ samples. This approximation will always suffice for us; so, for the sake of clarity in exposition, we will henceforth ignore this technical detail and just assume that we have access to $|V|/n$ (as in e.g. \cite{frieze-kannan-matrix}).
\iffalse
the \emph{Probe }model of computation,
in which we assume that given a pair of indices $(i,j)$ and a matrix
$A$, we can determine $A(i,j)$ in $O(1)$ time. The same model of
computation was also used in \cite{frieze-kannan-matrix}. We will also assume that we can perform
basic arithmetic operations and random sampling in $O(1)$ time. 
\fi

\subsection{Main results}
%%\vnote{Please check the statements of the theorems! I came up with them lazily}
%% We were too lazy to do this
%\fnote{Maybe we can just give short ``informal'' statements here and defer ``full'' statements
%to the text? E.g. something like: There is an algorithm running in time $2^{\tilde O(1/\epsilon^2\Delta)}$ which computes an $\epsilon \|\vec J\|_1$ additive
%approximation to $\log Z$ for $n$ sufficiently large, with high probability and for $n$ sufficiently large.}
%Before stating our main results, we recall that when $||\vec{J}||_{\infty} < 1/n$, Dobrushin's uniqueness criterion implies that there is correlation decay 
%and hence the Glauber dynamics mixes randomly
%\cite{dobrushin-uniqueness}, so Markov chain based methods provide much stronger guarantees than our techniques. Therefore, we will restrict ourselves to the case $||\vec{J}||_{\infty} \geq 1/n$ in the sequel.\footnote{However, this assumption is not essential; if we are willing to use $O(\log n)$ runtime, our algorithm can give an $\epsilon \|\vec J\|_1$ approximation as long as $\|\vec J\|_{\infty} = \omega(\frac{\log n}{n^2})$. This is done just by setting $\lambda = O(\epsilon/n)$ in the proof of Theorem~\ref{thm-Delta-dense}.}
%\begin{comment}
%\fnote{Our actual theorem statements (section 6) rely on assumptions about what
%the interesting parameter regime is for $\|\vec J\|_1$, which needs to be explained by citing Dobrushyn uniqueness. }
%\end{comment}

Our main theorem for $\Delta$-dense Ising models is:
\begin{theorem}\label{thm-Delta-dense} 
%\begin{comment}
%For $\Delta$-dense Ising models, there exists a randomized algorithm
%$A$ such that given $\epsilon,\delta>0$, $A$ computes with probability
%$\geq1-\delta$ an $\epsilon J_{T}$ additive approximation to $\log Z$
%in time $2^{O(\frac{1}{\Delta\epsilon^{2}})}$. 
%\end{comment}
There exists a universal constant $C$ such that for any $\epsilon, \delta > 0$, and for any $\Delta$-dense Ising model $J$ on $n$ nodes with $n \geq 2^{C/\epsilon^2 \Delta}$ 
%and $||\vec{J}||_{\infty} \ge 1/n$, 
there is an algorithm running in time $2^{\tilde O(1/\epsilon^2\Delta)}\log(1/\delta)$ which computes an $\epsilon (n+\|\vec J\|_1)$ additive approximation
to $\log Z$ with probability at least $1 - \delta$.
\end{theorem}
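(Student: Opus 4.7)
The approach is to combine an algorithmic version of the Frieze--Kannan weak regularity lemma (adapted to the $\Delta$-dense regime) with a mean-field calculation on the resulting block partition, so that $\log Z$ is reduced to a low-dimensional optimization whose size depends only on $\epsilon$ and $\Delta$.

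First I would invoke a sublinear/algorithmic Frieze--Kannan decomposition to produce, in constant time, a cut approximation $\tilde J = \sum_{k=1}^s c_k u_k v_k^T$ with $u_k,v_k \in \{0,1\}^n$ satisfying $\|J - \tilde J\|_{\infty \mapsto 1} \leq \epsilon \|\vec J\|_1$. The baseline bound gives $s = O(1/\eta^2)$ cuts for a cut-norm error of $\eta \cdot n\|\vec J\|_2$; choosing $\eta$ to hit $\epsilon\|\vec J\|_1$ and using the $\Delta$-density consequence $\|\vec J\|_2^2 \leq \|\vec J\|_\infty\|\vec J\|_1 \leq \|\vec J\|_1^2/(n^2\Delta)$ yields $s = O(1/(\epsilon^2\Delta))$. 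Overlaying the binary labels $u_k(i),v_k(i)$ partitions $[n]$ into $r \leq 2^{O(s)} = 2^{O(1/(\epsilon^2\Delta))}$ blocks $V_1,\dots,V_r$ on which $\tilde J$ is constant, and the sublinear construction yields an $O(1)$-time membership oracle for each block. Since $|x^T(J-\tilde J)x| \leq \|x\|_\infty \|(J-\tilde J)x\|_1 \leq \|J-\tilde J\|_{\infty\mapsto 1}$ for every $x\in\{\pm 1\}^n$, we get $|\log Z_J - \log Z_{\tilde J}| \leq \epsilon \|\vec J\|_1$, reducing the task to estimating $\log Z_{\tilde J}$.

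Because $\tilde J$ is block-constant, $x^T\tilde J x$ depends on $x$ only through the block-magnetization vector $m = (m_1,\dots,m_r)$, $m_i := |V_i|^{-1}\sum_{j\in V_i} x_j$; write $x^T\tilde J x = g(m)$ for the induced quadratic form. Grouping configurations with equal $m$,
\[
Z_{\tilde J} \;=\; \sum_{m} N(m)\, e^{g(m)},
\]
where $N(m)$ is a product of binomial coefficients. Stirling gives $\log N(m) = \sum_i |V_i|\, H\bigl((1+m_i)/2\bigr) + O(r\log n)$ (with $H$ the binary entropy), and the outer sum has at most $n^r$ terms, so by a Laplace bound
\[
\log Z_{\tilde J} \;=\; \max_{m}\Bigl[\,\sum_i |V_i|\, H\bigl(\tfrac{1+m_i}{2}\bigr) + g(m)\,\Bigr] \;+\; O(r\log n).
\]
I would then discretize $m\in[-1,1]^r$ to a uniform grid of step $\epsilon$, giving $(2/\epsilon)^r = 2^{\tilde O(1/(\epsilon^2\Delta))}$ grid points, and evaluate the objective at each (using Chernoff-estimated block sizes $|V_i|/n$). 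Lipschitz control of $H$ (with a separate treatment near $\pm 1$) and the bound $|\partial g/\partial m_i| = O(\|\vec J\|_1/n)$ confine the grid-rounding error to $O(\epsilon(n+\|\vec J\|_1))$. The hypothesis $n \geq 2^{C/(\epsilon^2\Delta)}$ is precisely what is required to make $r\log n \leq \epsilon n$, absorbing the Stirling/Laplace error into the $\epsilon n$ budget, and all sample complexities contribute only a $\log(1/\delta)$ factor through standard concentration.

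The main obstacle will be Step~1: deriving a genuine partition with constant-time membership oracles (not merely an $L^2$ matrix approximant) from a sublinear Frieze--Kannan argument, and checking that the number of cuts scales with $1/(\epsilon^2\Delta)$ rather than with the usual uniform-density parameter. Once this regularity step is in place, the remaining analysis is careful but standard bookkeeping: one must verify that the regularity error $\epsilon \|\vec J\|_1$, the Stirling/Laplace error $O(r\log n)$, and the grid-discretization error $O(\epsilon(n+\|\vec J\|_1))$ all fit inside a single $\epsilon(n+\|\vec J\|_1)$ error budget, which is exactly what the lower bound $n \geq 2^{C/(\epsilon^2\Delta)}$ is engineered to permit.
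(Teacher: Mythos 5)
Your overall architecture matches the paper's: Frieze--Kannan cut decomposition with width $s=O(1/\epsilon^2\Delta)$ via the density hypothesis, the $\|\cdot\|_{\infty\mapsto 1}$ bound to replace $\log Z$ by $\log Z_{\tilde J}$, refinement into $r\le 2^{O(s)}$ atoms, Stirling plus a max-term (Laplace) bound, and the hypothesis $n\ge 2^{C/\epsilon^2\Delta}$ to absorb the $O(r\log n)$ error. However, there is a genuine gap in your final step. You discretize the full block-magnetization vector $m\in[-1,1]^r$ on a grid of step $\epsilon$ and claim $(2/\epsilon)^r = 2^{\tilde O(1/\epsilon^2\Delta)}$ grid points. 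Since $r$ is itself $2^{\Theta(s)} = 2^{\Theta(1/\epsilon^2\Delta)}$ (the atoms come from the common refinement of all $2s$ cut sets), the grid has $(2/\epsilon)^{2^{\Theta(1/\epsilon^2\Delta)}}$ points, which is \emph{doubly} exponential in $1/\epsilon^2\Delta$. You appear to have conflated the number of cuts $s$ with the number of atoms $r$. As written, your algorithm does not achieve the claimed runtime $2^{\tilde O(1/\epsilon^2\Delta)}$.

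The paper avoids exactly this blow-up by exploiting the fact that the energy $g(m)$ depends on $m$ only through the $2s$ aggregated quantities $\sum_{a:V_a\subseteq R_i} m_a|V_a|$ and $\sum_{a:V_a\subseteq C_i} m_a|V_a|$. It therefore enumerates a grid only over this $2s$-dimensional space (giving $2^{\tilde O(1/\epsilon^2\Delta)}$ cells), and for each cell solves the remaining $r$-dimensional \emph{entropy maximization} over the atoms as a convex program (via the ellipsoid method), which runs in time polynomial in $r=2^{O(s)}$, hence still $2^{\tilde O(1/\epsilon^2\Delta)}$. Your proposal needs this extra idea: brute force over the atoms cannot work, but the inner maximization of $\sum_a |V_a|H((1+m_a)/2)$ subject to the $2s$ linear constraints is concave and efficiently solvable. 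A secondary, fixable issue: your per-coordinate Lipschitz bound $|\partial g/\partial m_i| = O(\|\vec J\|_1/n)$ should be weighted by $|V_i|$ and summed carefully (the paper's Lemma~\ref{lemma:gamma-def} takes the granularity $\gamma$ proportional to $\epsilon\sqrt{\Delta}/s$ for this reason), but this only affects constants, not the structure of the argument.
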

Our methods extend in a straightforward manner to higher order Markov random fields on general finite alphabets as well. We will review the definitions in \cref{mrf}. For simplicity, we only state the result for the case of binary $k$-uniform Markov random fields. 
%As above, we restrict our attention to the case when $||\vec{J}||_{\infty} \geq 1/n^{k-1}$. 
\begin{theorem}\label{thm-mrf} 
Fix $k \geq 3$. There exists a universal constant $C$ such that for any $\epsilon, \delta > 0$, and for any $\Delta$-dense binary $k$-uniform Markov random field $J$ on $n$ with $n\geq 2^{C(\epsilon\sqrt{\Delta})^{2-2k}} $ 
%and  $||\vec{J}||_{\infty} \ge 1/n^{k-1}$, 
there is an algorithm running in time $2^{\tilde O(\epsilon\sqrt{\Delta})^{2-2k}}\log(1/\delta)$ which computes an $\epsilon(\|\vec J\|_1 + n)$ additive approximation
to $\log Z$ with probability at least $1 - \delta$.
\end{theorem}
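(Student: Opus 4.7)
The proof follows the template of \cref{thm-Delta-dense}, with each matrix-level ingredient replaced by its $k$-tensor analog. First, I would apply an algorithmic tensor version of the Frieze--Kannan weak regularity lemma to decompose the interaction tensor as $J = \widetilde{J} + E$, where
\[
\widetilde{J} = \sum_{\ell=1}^{r} d_\ell\, \mathbf{1}_{S_{\ell,1}} \otimes \cdots \otimes \mathbf{1}_{S_{\ell,k}}
\]
is a sum of cut tensors and $E$ has small multilinear cut norm. The $\Delta$-dense hypothesis $\|\vec{J}\|_\infty \le \|\vec{J}\|_1/(\Delta n^k)$, together with the trivial bound $\|J\|_F^2 \le \|\vec{J}\|_\infty\, \|\vec{J}\|_1$, controls how cleanly the cut-norm error translates into log-partition error. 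A polarization / inclusion-exclusion argument, exploiting the symmetry of $J$, shows that for $x \in \{\pm 1\}^n$ the diagonal evaluation $|E(x,\ldots,x)|$ is bounded by $2^k$ times the multilinear cut norm. Choosing the regularity parameter $\eta \asymp \epsilon\sqrt{\Delta}$ then forces $|\log Z(J) - \log Z(\widetilde{J})| \le \epsilon \|\vec{J}\|_1/2$, and the $k$-tensor Frieze--Kannan bound $r = O(1/\eta^{2(k-1)})$ yields $r = O((\epsilon\sqrt{\Delta})^{2-2k})$ cut tensors.

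Next, I would approximate $\log Z(\widetilde{J})$ by a mean-field variational formula. The sets $\{S_{\ell,j}\}_{\ell\le r,\,j\le k}$ induce a partition of $[n]$ into at most $A \le 2^{rk}$ atoms; the hypothesis $n \ge 2^{C(\epsilon\sqrt{\Delta})^{2-2k}}$ ensures that each atom is large enough for Chernoff sampling to estimate its relative size to additive error $\epsilon$ in constant time. Because $\widetilde{J}$ is constant on each atom, the Gibbs variational principle reduces $\log Z(\widetilde{J})$ to the maximum over a vector $p \in [-1,1]^A$ of an objective $H(p) + U(p)$, where $H$ is the atomwise Shannon entropy (weighted by atom sizes) and $U$ is a degree-$k$ polynomial in $p$ whose coefficients are determined by the $d_\ell$'s and the atom sizes. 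Since this objective is $O(\|\vec{J}\|_1 + n)$-Lipschitz, an exhaustive search over a polynomial-mesh discretization of $[-1,1]^A$ finds its maximum to additive error $\epsilon(\|\vec{J}\|_1 + n)/2$ in time $2^{\tilde{O}(A)} = 2^{\tilde{O}((\epsilon\sqrt{\Delta})^{2-2k})}$; the $\log(1/\delta)$ factor absorbs the confidence of the sampling subroutine.

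The main obstacle is the passage from cut norm to log-partition error. In the matrix case the standard equivalence between the rectangular cut norm and the $\infty\mapsto 1$ norm suffices; for $k$-tensors one must instead compare the $k$-linear cut norm (with different Boolean test vectors on each mode) against the diagonal evaluation $E(x,\ldots,x)$ appearing in the partition function, and combine this comparison with the density hypothesis so that the scaling in both $\epsilon$ and $\Delta$ is tight. Once this comparison is established, every other ingredient---tensor Frieze--Kannan, the mean-field variational principle, and Chernoff sampling of atom sizes---extends cleanly from $k=2$ to general $k$, producing the claimed runtime, sample complexity, and accuracy guarantees.
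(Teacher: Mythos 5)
Your first paragraph matches the paper's route: apply the algorithmic tensor weak regularity lemma (\cref{reg-fk-higher}) with regularity parameter $\Theta(\epsilon\sqrt{\Delta})$, use $\|\vec J\|_2^2\le\|\vec J\|_\infty\|\vec J\|_1\le \|\vec J\|_1^2/(\Delta n^k)$ to convert the cut-decomposition error into an $O(2^k\epsilon\|\vec J\|_1/\sqrt{\Delta})$ bound on $|\log Z(J)-\log Z(\widetilde J)|$, exactly as in \cref{applying-reg-lemma}. (Your polarization step is harmless but unnecessary here, since \cref{reg-fk-higher} already states its error in the $\infty\mapsto 1$ operator norm, which directly controls the diagonal evaluation $E(x,\dots,x)$ for $\|x\|_\infty\le 1$.) The reduction of $\log Z(\widetilde J)$ to a finite entropy-plus-energy maximization over atom magnetizations is also the paper's reduction (via Stirling and a max-of-sum bound rather than an appeal to the Gibbs variational principle, but these amount to the same $O(2^{rk}\log n)$ error).

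The genuine gap is in the runtime of your final optimization step. You propose an exhaustive grid search over $p\in[-1,1]^A$ where $A\le 2^{rk}$ is the number of atoms, and you assert this takes time $2^{\tilde O(A)}=2^{\tilde O((\epsilon\sqrt{\Delta})^{2-2k})}$. But $A$ is \emph{exponential} in the width $r=\Theta((\epsilon\sqrt{\Delta})^{2-2k})$, so $2^{\tilde O(A)}=2^{2^{\tilde O(r)}}$ is doubly exponential in $(\epsilon\sqrt{\Delta})^{2-2k}$ and does not match the claimed bound. You cannot simply shrink the search space to the $O(rk)$ cut sets $S_{\ell,j}$ either, because the entropy term is not determined by those overlapping marginals. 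The paper resolves this tension by splitting the optimization: it enumerates a $\gamma$-mesh of net spins only for the $O(rk)$ sets $S_{\ell,j}$ (which, by the analog of \cref{lemma:gamma-def}, pins down the energy term to within $\epsilon\|\vec J\|_1/2$), and for each such guess maximizes the \emph{entropy alone} over the atoms subject to linear consistency constraints. That inner problem is concave, so the ellipsoid method solves it in time $\mathrm{poly}(A)=2^{O(rk)}$, and the total runtime is $2^{\tilde O(r)}\cdot 2^{O(rk)}=2^{\tilde O((\epsilon\sqrt{\Delta})^{2-2k})}$ as claimed. Note also that your full objective $H(p)+U(p)$ is not concave (the degree-$k$ energy has arbitrary-sign coefficients), so the decoupling into an enumerated energy part and a concave entropy part is what makes the efficient solution possible; without it your algorithm is correct but only establishes a doubly exponential (still $n$-independent) running time.
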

In the previous theorem, it is possible to improve the dependence
on $\epsilon$ at the expense of introducing a factor of $n^{k}$
in the running time. We have: 
\begin{theorem}\label{thm-mrf-nonconstant}
Fix $k \geq 3$. There exists a universal constant $C$ such that for any $\epsilon, \delta > 0$, and for any $\Delta$-dense binary $k$-uniform Markov random field $J$ on $n$ with $n\geq 2^{C/\epsilon^{2}\Delta}$ 
% and $||\vec{J}||_{\infty} \ge 1/n^{k-1}$, 
there is an algorithm running in time $2^{\tilde{O}(1/\epsilon^{2}\Delta)}\log(1/\delta)n^{k}$ which computes an $\epsilon (\|\vec J\|_1 + n)$ additive approximation
to $\log Z$ with probability at least $1 - \delta$.
\end{theorem}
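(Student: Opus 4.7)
The plan is to follow the same overall template as the proof of \cref{thm-mrf}, but to replace the sampling-based routine for finding a ``cut tensor'' at each greedy step by one that reads the entire interaction tensor $J$. This is what accounts for the extra $n^{k}$ factor in the running time, and it is what allows us to match the matrix-case exponent $\tilde O(1/\epsilon^{2}\Delta)$ uniformly in $k$ rather than paying the $k$-dependent sample-complexity penalty that produces the worse $(\epsilon\sqrt{\Delta})^{2-2k}$ exponent in \cref{thm-mrf}.

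Concretely, view $J \in \R^{n^{k}}$ and run a Frieze--Kannan-style greedy iteration. Let $\|R\|_{\square} := \sup\{R(\mathbf{1}_{S_{1}},\ldots,\mathbf{1}_{S_{k}}) : S_{1},\ldots,S_{k} \subseteq [n]\}$ denote the multilinear analog of the $\infty \to 1$ norm. At each step $t$, given the residual $R_{t}$, compute deterministically an $O(1)$-approximate maximizer $(S_{1}^{(t)},\ldots,S_{k}^{(t)})$ of $R_{t}(\mathbf{1}_{S_{1}},\ldots,\mathbf{1}_{S_{k}})$ in time $O(n^{k})$, and subtract the corresponding rank-one cut tensor $d_{t}\mathbf{1}_{S_{1}^{(t)}} \otimes \cdots \otimes \mathbf{1}_{S_{k}^{(t)}}$ from the current approximation. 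The squared-Frobenius energy-decrement argument, combined with the $\Delta$-density bound $\|\vec J\|_{2}^{2} \le \|\vec J\|_{\infty} \|\vec J\|_{1} \le \|\vec J\|_{1}^{2}/(\Delta n^{k})$, shows that after $s = \tilde O(1/\epsilon^{2}\Delta)$ rounds the residual has cut norm at most $\epsilon \|\vec J\|_{1}$. The key point is that both the per-round decrement (a rank-one correction in all $k$ factors) and the density bound (once expressed in terms of $\|\vec J\|_{1}$) are $k$-independent; the $k$-dependence of \cref{thm-mrf} enters only through the variance of random-sampling estimators of the multilinear form, which we have eliminated.

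Given the decomposition $J \approx J^{*} = \sum_{t=1}^{s} d_{t} \mathbf{1}_{S_{1}^{(t)}} \otimes \cdots \otimes \mathbf{1}_{S_{k}^{(t)}}$, the family $\{S_{j}^{(t)}\}_{t,j}$ partitions $[n]$ into $N \le 2^{ks}$ blocks on which $J^{*}(x,\ldots,x)$ depends only through the $N$ block-averaged magnetizations. The log partition function of the approximating model therefore reduces to a variational problem on $[-1,1]^{N}$, which we solve by a grid of mesh $O(\epsilon)$ in $2^{\tilde O(s)} = 2^{\tilde O(1/\epsilon^{2}\Delta)}$ time. Converting the cut-norm bound $\|J - J^{*}\|_{\square} \le \epsilon \|\vec J\|_{1}$ into the additive error $\epsilon(\|\vec J\|_{1} + n)$ on $\log Z$ is then the same mean-field/entropy calculation as in \cref{thm-mrf}, with the additive $n$ absorbing the discrepancy between the block-constant trial distribution and the true Gibbs measure and with the lower bound on $n$ ensuring that a $\log n$ gap between the true log partition function and its block-constant surrogate is dominated by $\epsilon n$.

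The main obstacle is the tensor cut-norm decrement lemma with a $k$-independent rate: we need the cut-tensor maximizer at each round to be identified up to a constant factor, which is precisely the role of the $O(n^{k})$ per-iteration work. Since exact tensor cut norm is NP-hard, we rely on a deterministic constant-factor relaxation (for instance, an iterative single-coordinate ascent over the $k$ factors, or an SDP-type relaxation) that reads all of $J$; a constant-factor slack here is harmless because it is absorbed into the $\tilde O(\cdot)$ governing $s$. Verifying these two facts, namely that (a) the Frieze--Kannan energy argument for tensors produces the same $\tilde O(1/\epsilon^{2}\Delta)$ iteration count as in the matrix case when the maximizer is found up to a constant factor, and (b) the block-constant approximation can be plugged into the variational bound from \cref{thm-mrf} without modification, is the essential content of the proof.
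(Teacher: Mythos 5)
Your overall architecture matches the paper's: obtain a cut decomposition of the tensor of width $\tilde O(1/\epsilon^2\Delta)$ by an algorithm that is allowed to read all $n^k$ entries, then run the same refinement-plus-optimization pipeline as in the Ising case. The paper, however, does not re-derive the decomposition: it simply invokes the algorithmic regularity lemma of Alon et al.\ (\cref{reg-alon-etal}), which produces width $4/\epsilon^2$, coefficient length $O(\|\vec J\|_2/\epsilon\sqrt N)$ and error $\epsilon 2^k\sqrt N\|\vec J\|_2$ in time $2^{O(1/\epsilon^2)}O(N)$ with $N=n^k$, and then says the rest of the proof is identical to that of \cref{thm-mrf}. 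Your substitute for this black box is where the gap lies: you assert that at each greedy round one can deterministically find an $O(1)$-approximate maximizer of the tensor cut form in time $O(n^k)$, e.g.\ by alternating single-coordinate ascent or ``an SDP-type relaxation.'' For $k\ge 3$ neither is known to give a constant-factor approximation to the tensor cut norm (alternating maximization only reaches a local optimum, and constant-factor Grothendieck-type guarantees do not extend to $k\ge3$ tensors). What the energy-decrement argument actually needs --- and what \cref{reg-alon-etal} supplies --- is the weaker \emph{additive} guarantee: either exhibit a cut correlating at least $\Omega(\epsilon\sqrt N\|\vec J\|_2)$ with the residual or certify none exists; the $2^{O(1/\epsilon^2)}$ factor in the running time is the price of that subroutine. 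As written, your key lemma is unsupported, though it is repairable by weakening it to the additive form and citing or reproving the AFKK result.

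A second, quantitative problem: after refining to $N\le 2^{ks}$ atoms you propose to grid all $N$ block magnetizations with mesh $O(\epsilon)$ and claim this costs $2^{\tilde O(s)}$. It costs $(O(1/\epsilon))^{N}=2^{2^{\tilde O(1/\epsilon^2\Delta)}}$, which is still constant in $n$ but does not meet the stated $2^{\tilde O(1/\epsilon^2\Delta)}$ bound. The paper avoids this by gridding only the $O(s)$ per-cut-set magnetizations $\overline r,\overline c$ and delegating the choice of the exponentially many atom values to the max-entropy convex program of Algorithm~\ref{convex-partition} (together with the exact counting and Stirling step of \cref{z''-approx}, which is where the hypothesis $n\ge 2^{C/\epsilon^2\Delta}$ is used). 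You would need the same restructuring to obtain the claimed running time.
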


\begin{remark}\label{rmk-general-mrf}
Although for simplicity, we stated \cref{thm-mrf} and \cref{thm-mrf-nonconstant} only for $k$-uniform $\Delta$-dense Markov random fields, it is immediately seen that the results extend to general $\Delta$-dense Markov random fields of order $K$ by simply applying \cref{reg-fk-higher} or \cref{reg-alon-etal} to each $J^{(k)}$, $k\in[K]$. Again, see \cref{mrf} for definitions. In particular, this directly handles dense Ising models with external fields.
\end{remark}

\begin{remark}\label{rmk-J-bd}
The error term in~\cref{thm-Delta-dense} has two terms $n$ and $\|\vec J\|_1$. 
%By the density condition, 
If $\|\vec J\|_1 \leq \eta n$, 
%then $\| \vec{J} \|_{\infty} = o(\eta/n)$, in which case 
then $|\log \exp(x^T J x)| = O(\eta n)$ and therefore, $\log Z = n( \log 2 + O(\eta))$.  
%the Ising model satisfies the Dobrushin condition~\cite{dobrushin-uniqueness} which implies decay of correlation and also that $\log Z = (1+o(1)) n \log_2$. 
Similar statements hold for the other theorems as well. Thus, in interesting applications of the theorem, we may assume WLOG that the dominant term is $\|\vec J\|_1$ (by dominant here, we mean it is at least a small fraction of the other term). 
\end{remark} 

As in \cite{risteski-ising}, we are also able to handle the case of Ising models whose interaction matrices have low threshold ranks, although not in constant time. Again, for simplicity, we only record the result for regular low threshold rank models. Definitions will be presented in \cref{ltr}. Following \cref{rmk-J-bd}, we assume $J'\geq 1$. 

\begin{theorem}\label{thm-ltr}
There exists a universal constant $C$ such that for any $\epsilon > 0$, and for any regular Ising model $J$ on $n$ nodes with $n \geq 2^{Ct/\epsilon^2}$ (here, t is the $\epsilon/2$ sum-of-squares rank of the model) and $J' \geq 1$, there is an algorithm running in time $2^{\tilde O(t/\epsilon^2)} + poly(n,t,\frac{1}{\epsilon})$ which computes an $\epsilon \|\vec J\|_1$ additive approximation to $\log Z$. 
\end{theorem}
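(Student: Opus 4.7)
The plan is to reduce \cref{thm-ltr} to a rank-$t$ computation via a truncated spectral decomposition of $J$, and then compute the partition function of the rank-$t$ piece by a grid enumeration in the $t$-dimensional space of spectral projections.

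First, I would compute, in $\mathrm{poly}(n,t,1/\epsilon)$ time, a decomposition $J = J_L + J_R$, where $J_L = \sum_{i=1}^{t} \lambda_i u_i u_i^T$ collects the top-$t$ spectral directions of $J$ and $J_R$ is the remainder. The notion of sum-of-squares rank (see \cref{ltr}) is engineered precisely so that $J_R$ has small cut-type norm; concretely, for a regular model one should obtain $\|J_R\|_{\infty\to 1} \le (\epsilon/2)\|\vec J\|_1$ after a Frieze--Kannan style bookkeeping. Since $|x^T J_R x| \le \|J_R\|_{\infty\to 1}$ for every $x\in\{\pm1\}^n$, this gives the pointwise bound
\[
|\log Z(J) - \log Z(J_L)| \;\le\; (\epsilon/2)\|\vec J\|_1,
\]
so $J_R$ is absorbed into the target additive slack. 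This is the step that consumes the $\mathrm{poly}(n,t,1/\epsilon)$ portion of the runtime.

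Next, for $\log Z(J_L)$, I would exploit that $x^T J_L x = \sum_{i=1}^{t} \lambda_i\langle u_i,x\rangle^2$ depends on $x$ only through the vector of spectral projections $\pi(x)=(\langle u_1,x\rangle,\dots,\langle u_t,x\rangle)\in[-\sqrt n,\sqrt n]^t$. Partition this hypercube into cubes of side $\epsilon\sqrt n/\mathrm{poly}(t,1/\epsilon)$, giving $2^{\tilde O(t/\epsilon^2)}$ cells. For each cell $C$ with representative $p$, the quadratic form $x^T J_L x$ is constant to within $O(\epsilon)\|\vec J\|_1$, and one approximates
\[
Z(J_L) \;\approx\; \sum_{C}\,\exp\!\Bigl(\textstyle\sum_i \lambda_i p_i^2\Bigr)\cdot N_C, \qquad N_C:=\bigl|\{x\in\{\pm1\}^n : \pi(x)\in C\}\bigr|,
\]
where $\log N_C$ is estimated by the maximum-entropy formula $n\cdot H^*(p/\sqrt n)$, with $H^*$ the Legendre transform over $\{\pm1\}$-valued product distributions subject to matching the $t$ prescribed Chow parameters. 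Each $H^*$ evaluation is a concave $t$-dimensional convex program solvable in $\mathrm{poly}(t,1/\epsilon)$ time per cell.

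The error analysis then splits into three contributions: the spectral-tail contribution (handled in step one); the discretization error on $x^T J_L x$, controlled by the grid mesh times $\sum_i|\lambda_i|$ and kept below $\epsilon\|\vec J\|_1/2$; and the max-entropy approximation to $N_C$, which is accurate up to polynomial-in-$n$ factors and hence contributes $O(\log n) = O(\epsilon n)$ once absorbed via $J'\ge 1$ and \cref{rmk-J-bd}. The main obstacle, in my view, is the counting step: rigorously matching the combinatorial count $N_C$ to the continuous max-entropy value, uniformly across all $2^{\tilde O(t/\epsilon^2)}$ cells with error at most $\epsilon n$. The cleanest way to avoid a delicate local-central-limit calculation is to bypass lattice counting altogether and instead recast $\log Z(J_L)$ through its $t$-dimensional variational principle (a mean-field problem expressed in the spectral basis) and optimize that principle to accuracy $O(\epsilon\|\vec J\|_1)$ by grid search, which yields directly the claimed $2^{\tilde O(t/\epsilon^2)} + \mathrm{poly}(n,t,1/\epsilon)$ runtime.
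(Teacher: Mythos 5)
Your outer architecture (strip a spectrally small tail, reduce to a low-dimensional summation over level sets of a few linear statistics, replace the count of each level set by an entropy, and grid-search the resulting energy--entropy functional) is the same as the paper's, and your tail bound is essentially right: for a regular model, discarding the eigenvalues of $J_D$ with $|\lambda_i|\le \epsilon/2$ changes $x^TJx$ by at most $(\epsilon/2)J'\|x\|_2^2=(\epsilon/2)\|\vec J\|_1$ pointwise. (Two small corrections there: you must keep \emph{all} eigenvalues exceeding the threshold $\epsilon/2$ --- there are at most $4t/\epsilon^2$ of them, not $t$, since $t$ is the \emph{sum of squares} of the large eigenvalues --- and your grid should live in $O(t/\epsilon^2)$ dimensions accordingly; the cell count still comes out to $2^{\tilde O(t/\epsilon^2)}$.)

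The genuine gap is the step you yourself flag: justifying $\log N_C \approx n H^*(p/\sqrt n)$ uniformly over cells. The max-entropy value is always an \emph{upper} bound on $\log N_C$ (exponential tilting), but the matching lower bound for level sets of $t$ arbitrary real linear functionals $\langle u_i,\cdot\rangle$ requires a local-CLT-type argument and control of the dual variables, and can degenerate near the boundary of the achievable region. Your proposed escape --- ``recast $\log Z(J_L)$ through its $t$-dimensional variational principle'' --- assumes exactly what must be proved: that the log-partition function of a low-threshold-rank model is captured up to $\epsilon\|\vec J\|_1$ by a finite-dimensional mean-field functional is the nontrivial content of the theorem (and of Risteski's result), not a known identity one can invoke. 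The paper avoids this entirely by using the Gharan--Trevisan \emph{cut decomposition} (\cref{reg-ghar-trev}) in place of the raw spectral decomposition: because each $D^{(i)}=CUT(R_i,C_i,d_i)$ is a cut matrix, the energy depends on $x$ only through the integer counts of up-spins in the atoms of the common refinement of the $R_i,C_i$, and the number of configurations with prescribed counts is \emph{exactly} $\prod_a\binom{|V_a|}{y_a}$; Stirling then gives the entropy term with only $O(2^{2s}\log n)$ error (\cref{z''-approx}), and the rest of the dense-case pipeline (\cref{lemma:gamma-def}, Algorithm~\ref{convex-partition}) applies verbatim with $s=O(t/\epsilon^2)$ and $|d_i|\le\sqrt t/\|\vec J\|_1$. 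To complete your argument you would either need to prove the max-entropy lower bound for general spectral projections, or replace your spectral truncation by a cut decomposition as the paper does.
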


\subsection{Approximating the expected total magnetization}
%\enote{I think the proofs of this section should go to the end of the paper but we should include a statement that is something like the following:}

Given an Ising model $\Pr[X=x]=\frac{1}{Z}\exp\{\sum_{i,j}J_{i,j}x_{i}x_{j}+\sum_{i}h_{i}x_{i}\}$,
one of the most fundamental questions one can ask about it is how
many spins are $+1$ and $-1$ in expectation. In the case of the ferromagnetic
Ising model ($J_{i,j}\geq0$ for all $i,j$), this corresponds to
how strongly the system is magnetized. Accordingly, we
define the \emph{(expected total) magnetization }of an
Ising model by $\boldsymbol{E}[\sum_{i}x_{i}]$,
%, and the \emph{expected magnetization per site} by $\frac{1}{n}\boldsymbol{E}[\sum_{i}x_{i}]$, 
where the expectation
is with respect to the Ising distribution on $\{\pm1\}^{n}$. Perhaps
surprisingly, our results easily show that for dense
Ising models, one can obtain, in some sense, an approximation to the expected
magnetization in constant time. More precisely, we have:
\begin{theorem}\label{thm-approx-magnetization}
Consider a %ferromagnetic 
$\Delta$-dense Ising Model 
%on a dense graph 
\[
\Pr[X=x]:=\frac{1}{Z}\exp\{\sum_{i,j}J_{i,j}x_{i}x_{j} + \sum_i h_i x_i\}, \quad
\Delta ||\vec{J}||_{\infty} \le \frac{||\vec{J}||_1}{n^2}, \quad 
\Delta ||h||_{\infty} \le \frac{||\vec{h}||_1}{n}
\]
%where the $J_{i,j}$ are all positive and the model is dense. 
Let 
\[
\Pr_{h}[X=x]:=\frac{1}{Z}\exp\{\sum_{i,j}J_{i,j}x_{i}x_{j} + \sum_i (h_i+h) x_i\}
\]
and let $m_h$ denote the expected total magnetization for $\Pr_{h}$. Then, for any $\epsilon > 0$, we can find in constant time (depending only on $\epsilon$ and $\Delta$), an $\epsilon (n+\|\vec{J} \|_1)$ additive approximation to $m_h$, for some $h$ with $|h| < \eps$. 
\end{theorem}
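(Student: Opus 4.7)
The plan is to reduce the magnetization to a derivative of the log-partition function along a uniform external field, and then estimate that derivative by a symmetric finite difference using \cref{thm-Delta-dense}. Let $Z(h) := \sum_{x\in\{\pm1\}^n} \exp\bigl\{\sum_{i,j} J_{i,j} x_i x_j + \sum_i (h_i+h)x_i\bigr\}$ and $F(h) := \log Z(h)$. A direct computation gives $F'(h) = \E_h[\sum_i X_i] = m_h$, and $F$ is convex in $h$ as a log-moment-generating function of $\sum_i X_i$; consequently $m_h$ is monotonically non-decreasing in $h$.

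The key observation is that, for any $\eta > 0$,
\[
\frac{F(\eta) - F(-\eta)}{2\eta} \;=\; \frac{1}{2\eta} \int_{-\eta}^{\eta} m_t\, dt \;=\; m_{h^\ast}
\]
for some $h^\ast \in [-\eta, \eta]$: the middle equality is the fundamental theorem of calculus, and the last one follows by the intermediate value theorem applied to the monotone function $t \mapsto m_t$. Thus a \emph{symmetric} finite difference of $F$ equals $m_{h^\ast}$ \emph{exactly} for some $h^\ast \in [-\eta,\eta]$, and setting $\eta := \epsilon/2$ already guarantees the required $|h^\ast| < \epsilon$.

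It remains to approximate $F(\pm\eta)$. We apply the algorithm from \cref{thm-Delta-dense}, extended to models with external fields as in \cref{rmk-general-mrf}, with approximation parameter $\epsilon'$ (to be chosen) to each of the two models, obtaining estimates $\tilde F(\pm\eta)$ with $|\tilde F(\pm\eta) - F(\pm\eta)| \le \epsilon' \bigl(n + \|\vec J\|_1 + \|\vec h\|_1 + \eta n\bigr)$. The estimator $\tilde m := \bigl(\tilde F(\eta) - \tilde F(-\eta)\bigr)/(2\eta)$ then deviates from $m_{h^\ast}$ by at most $\epsilon'\bigl(n + \|\vec J\|_1 + \|\vec h\|_1 + \eta n\bigr)/\eta$. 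Taking $\eta = \epsilon/2$ and $\epsilon' \asymp \epsilon^2$, this is $O(\epsilon(n + \|\vec J\|_1))$, absorbing $\|\vec h\|_1$ into $n + \|\vec J\|_1$ as in \cref{rmk-J-bd}. The running time is $2^{\tilde O(1/(\epsilon^4\Delta))}\log(1/\delta)$, still independent of $n$. One should verify that the shifted fields $h_i + h$ with $|h| \le \epsilon/2$ still satisfy the $\Delta$-density hypothesis (possibly with a slightly worse absolute constant), which follows easily from the original density assumption and the smallness of $\eta$.

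The main obstacle is not technical but intrinsic. The symmetric finite difference only pins down the magnetization at \emph{some} $h^\ast \in [-\eta,\eta]$, not at $h=0$; this is exactly why the theorem is phrased with an existential quantifier over the field offset. Near a first-order phase transition in the external field (e.g.\ a ferromagnet below the critical temperature), $m_h$ is essentially discontinuous at $h=0$, so no constant-time algorithm can determine which phase the system is in, and the existential quantifier over $h$ cannot be removed.
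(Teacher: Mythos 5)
Your proposal is correct and follows essentially the same route as the paper: express $m_h$ as the derivative of $\log Z$ in a uniform external field, use convexity of $\log Z$ to identify a finite difference quotient with the magnetization at some nearby field $h^\ast$, and evaluate the difference quotient via \cref{thm-Delta-dense} with accuracy parameter $\asymp \epsilon^2$. The only (cosmetic) difference is that you use a symmetric difference with the fundamental theorem of calculus and the intermediate value theorem, whereas the paper uses the two one-sided difference quotients together with the mean value theorem; both yield the same conclusion and the same runtime.
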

\begin{proof}
%The case where $\| \vec{J}_1 \| = o(n)$ is trivial as in this case by Dobrushin uniqueness and the density condition. By the density condition $\| \vec{J}_{\infty} \| = o(1/n)$. This implies by Dobrushin uniqueness that we may assume WLOG that $J$ is identically zero with an error of $o(1)$ of magnetization per site. 
%Now the density condition on $h$ implies we can approximate the magnetization by a finite sample of single sites and computing the magnetization of each. 
%So WLOG we assume that $\| \vec{J}_1 \| = \Omega(n)$. 
It is well known that one can express the moments of spin systems in terms of derivatives of the log partition function. In particular, for the Ising model $\Pr[X=x]=\frac{1}{Z}\exp\{\sum_{i,j}J_{i,j}x_{i}x_{j}+\sum_{i}h_{i}x_{i}\}$,
consider the family of perturbed Ising models defined by $\Pr_{h}[X=x]=\frac{1}{Z_{h}}\exp\{\sum_{i,j}J_{i,j}x_{i}x_{j}+\sum_{i}(h_{i}+h)x_{i}\}$.
Then, for any $h_{0}$, we have 

\begin{align*}
\frac{\partial\log Z_{h}}{\partial h}(h_{0}) & =\frac{1}{Z_{h_{0}}}\frac{\partial}{\partial h}\left(\sum_{x\in\{\pm1\}^{n}}\exp\{\sum_{i,j}J_{i,j}x_{i}x_{j}+\sum_{i}(h_{i}+h)x_{i}\}\right)\\
 & =\sum_{x\in\{\pm1\}^{n}}\frac{1}{Z_{h_{0}}}\left(\exp\{\sum_{i,j}J_{i,j}x_{i}x_{j}+\sum_{i}(h_{i}+h_{0})x_{i}\}\right)\left(\sum_{i}x_{i}\right)\\
 & =\boldsymbol{E}_{h_{0}}[\sum_{i}x_{i}]
\end{align*}
where $\boldsymbol{E}_{h_{0}}$ denotes the expectation with respect
to the Ising distribution perturbed by $h_{0}$. In particular, $\frac{\partial\log Z_{h}}{\partial h}(0)$
equals the expected total magnetization of the Ising model we started
out with. Moreover, since by Jensen's inequality,
\begin{align*}
\frac{\partial^{2}\log Z_{h}}{\partial h^{2}}(h_{0}) & =\frac{\partial}{\partial h}|_{h=h_{0}}\sum_{x\in\{\pm1\}^{n}}\frac{1}{Z_{h_{0}}}\left(\exp\{\sum_{i,j}J_{i,j}x_{i}x_{j}+\sum_{i}(h_{i}+h_{0})x_{i}\}\right)\left(\sum_{i}x_{i}\right)\\
 & =\boldsymbol{E}_{h_{0}}[(\sum_{i}x_{i})^{2}]-(\boldsymbol{E}_{h_{0}}[\sum_{i}x_{i}])^{2}\\
 & \geq0
\end{align*}
we see that $\log Z$ is convex in $h$; in particular, for any $h_{0}\in\R$
and any $\delta>0$, we have 

\[
\frac{\log Z(h_{0})-\log Z(h_{0}-\delta)}{\delta}\leq\frac{\partial\log Z}{\partial h}(h_{0})\leq\frac{\log Z(h_{0}+\delta)-\log Z(h_{0})}{\delta} 
\]
Finally, 
\begin{itemize}
\item By the mean value theorem, the LHS /RHS  of the equation above are given by 
$\boldsymbol{E}_{h'}[\sum_{i}x_{i}]$ and $\boldsymbol{E}_{h''}[\sum_{i}x_{i}]$, where
$h_0 - \delta < h' < h_0 < h'' < h_0 + \delta$.
\item By setting $\epsilon = \delta^2$ in  \cref{thm-Delta-dense} and \cref{rmk-general-mrf}, we can evaluate the LHS and RHS up to the desired error in constant time 
%{\bf Need Theorem Name here}
\end{itemize}

\end{proof}

We remark that: 
\begin{itemize}
\item Unfortunately, it is impossible to approximate in constant time the magnetization at the specified value of the external fields. For example, consider an Ising model on $4 n$ vertices, where 
$J_{i,j} = C$ for some large $C$ if $i,j \leq 2n$ and $J_{i,j} = 0$ otherwise. 
Let $h_i = 1$ if $i \in [2n+1,3n]$ and $h_i = -1$ if $i \in [3n+1,4n]$.
We set all the other $h_i$ to $0$ except that we set $h_I = X$, where 
$I$ is uniformly chosen in $[1,2n]$ and $X$ is uniformly chosen in $\{0,\pm 1\}$. Note that this is a dense Ising model as per our definition. Note also that on the nodes $[1,2n]$ we have the Ising model on the complete graph with one (random) node having external field.

It is easy to see that if $X = 0$, the magnetization is $0$. 
The fact that $C$ is a large constant implies that conditioning on one vertex taking the value $\pm$ results in a dramatic change in magnetization on the vertices $[1,2n]$. In particular, the magnetization is of order $n$ if $X = +1$ and is of order $-n$ if $X = -1$. 
It thus follows that we need $\Omega(n)$ queries in order to determine the magnetization in this case. 
We note that this example corresponds to a phase transition -- in particular, for every $\epsilon > 0$, if 
$h' > \epsilon$ then $\boldsymbol{E}_{h'}[\sum_{i}x_{i}] = \Omega(n)$ for all values of $X$ and $I$. 
See (\cite{ellis2007entropy}) for general references for the Ising model on the complete graph.

\item The results for computing the magnetization readily extend to other models. 
For example, for Potts models, we can compute for each color the expected number of nodes of that color 
(up to error $\epsilon \| \vec{J} \|_1$ and for an $\epsilon$ close external field). 
Similarly, it is easy to check we can compute other statistics at this accuracy. For instance, for the Ising model, we can approximate $\boldsymbol{E}[\sum a_i x_i]$ if $n \eta \|a\|_{\infty} \leq \| a \|_1$ for some $\eta > 0$.

\end{itemize}

\subsection{Tightness of our results}\label{tightness-section}
As mentioned in the introduction, our results are \emph{qualitatively
tight}. We now make this precise by proving the following \emph{information-theoretic lower bound}
on the accuracy of constant-time algorithms for additively approximating $\log Z$: 
%any algorithm seeking to attain
%a $\epsilon \|\vec J\|_1$ w.h.p. additive approximation
%must probe $\Omega(1/\Delta\epsilon)$ many matrix entries.
\begin{theorem}\label{thm-qualitative-tightness}
Fix $\epsilon, \Delta \in (0,1/4)$. For any (possibly randomized) algorithm
$\mathcal A$ which probes at most $k := \frac{1}{8\epsilon \Delta}$ entries
of $J$ before returning an estimate to $\log Z$, there exists a $\Delta$-dense
input instance $J$ such that $\mathcal A$ makes error at least $\epsilon \|\vec J\|_1/4$ with probability at least $1/4$.
\iffalse
Let $k$ be an arbitrary positive integer, and
let $\mathcal A$ be any (possibly randomized) algorithm which probes
at most $k$ entries of $J$ before returning an estimate to $\log Z$. 
There exists an $\Omega(1/k)$-dense input instance $J$ 
where $\mathcal A$ makes error at least
$\|\vec J\|_1/4$ with probability at least $1/4$.
\fi
\end{theorem}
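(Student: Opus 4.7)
The plan is to apply Yao's minimax principle: it suffices to exhibit a distribution $\mu$ on $\Delta$-dense instances such that for every deterministic $k$-query algorithm $\mathcal{A}$, $\Pr_{J \sim \mu}[\mathcal{A}\text{ errs by at least }\epsilon \|\vec J\|_1/4] \ge 1/4$; Yao then yields a specific $\Delta$-dense $J$ (depending on the randomized algorithm) witnessing the theorem. I will take $\mu$ to be the uniform mixture of two Bernoulli random-graph ensembles $D_0, D_1$ of ferromagnetic Ising models at slightly different densities, so that indistinguishability comes from the iid product structure of the entries, and the $\log Z$ gap comes from evaluating in the low-temperature regime where $\log Z \approx \|\vec J\|_1$.

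Concretely, pick a large parameter $\alpha$ and set $p_0 \approx \Delta$ (slightly above, to absorb the concentration of $\|\vec J\|_1$ and guarantee $\Delta$-density with high probability) and $p_1 = p_0(1 + 4\epsilon)$. Under $D_b$, draw a symmetric $J$ by independently setting each unordered pair $\{i,j\}$ to $\alpha$ with probability $p_b$ and $0$ otherwise. The two key estimates are:
\emph{Indistinguishability:} each queried entry returns an independent $\mathrm{Bernoulli}(p_b) \cdot \alpha$, and sub-additivity of total variation over product distributions --- which extends to adaptive algorithms, since each newly queried entry is independent of the history under either $D_b$ --- gives $d_{TV}(\text{transcript under }D_0, \text{transcript under }D_1) \le k \cdot |p_1 - p_0| \le 4k\epsilon\Delta = 1/2$.
\emph{Log-$Z$ gap:} for ferromagnetic $J \ge 0$, evaluating at $x \equiv 1$ gives $\log Z(J) \ge \|\vec J\|_1$, while trivially $\log Z(J) \le \|\vec J\|_1 + n \log 2$. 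Hence $|\log Z^{(1)} - \log Z^{(0)}| \ge \bigl|\|\vec J^{(1)}\|_1 - \|\vec J^{(0)}\|_1\bigr| - n\log 2$, and by concentration $\|\vec J^{(b)}\|_1 \approx \alpha p_b n^2$, so the gap is $\Theta(\epsilon\alpha\Delta n^2) = \Theta(\epsilon \|\vec J\|_1)$ once $\alpha$ is taken large enough that $\|\vec J\|_1 \gg n\log 2$.

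These two estimates close the standard two-point argument: for any deterministic algorithm's output $\hat Z$, the events ``$\hat Z$ is within $\epsilon \|\vec J\|_1/4$ of $\log Z^{(0)}$'' and ``$\hat Z$ is within $\epsilon \|\vec J\|_1/4$ of $\log Z^{(1)}$'' are disjoint (the gap exceeds $\epsilon\|\vec J\|_1/2$), and indistinguishability forces $\Pr_{D_0}[\text{err}] + \Pr_{D_1}[\text{err}] \ge 1 - d_{TV} \ge 1/2$, so $\Pr_{J \sim \mu}[\text{err}] \ge 1/4$. The main obstacle is that both the low-temperature approximation $\log Z \approx \|\vec J\|_1$ and the concentration of $\|\vec J\|_1$ around $\alpha p_b n^2$ need to be sharp enough to preserve the $\Theta(\epsilon \|\vec J\|_1)$ gap; this is handled by taking $\alpha$ and $n$ sufficiently large, both of which are free parameters in the theorem.
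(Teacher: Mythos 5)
Your argument is correct in structure but takes a genuinely different route from the paper. The paper also reduces to a two-point hypothesis test, but its pair of instances is asymmetric: $J'_M$ is the deterministic complete graph with all weights $M$, and $J_M$ perturbs a random $\epsilon\Delta\binom{n}{2}$-subset of edges up to weight $M/\Delta$ (exactly saturating the density constraint); indistinguishability then follows because a $k$-query algorithm sees only weight-$M$ entries with probability $\ge 1-2k\epsilon\Delta$ under either instance, and the $\log Z$ gap is extracted from the limit $\lim_{M\to\infty}\log Z_M/M=\|\vec J_M\|_1/M$. You instead use two iid Bernoulli ensembles at densities $p_0$ and $p_0(1+4\epsilon)$, get the transcript TV bound from per-entry Bernoulli closeness summed over adaptive queries, and replace the $M\to\infty$ limit by the explicit sandwich $\|\vec J\|_1\le\log Z\le\|\vec J\|_1+n\log 2$. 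The explicit sandwich is arguably cleaner and more quantitative than the paper's limiting argument, and the product-measure construction is a natural alternative; the price you pay is that both the $\Delta$-density of the instances and the location of $\log Z$ are now only guaranteed up to concentration, so you must condition $\mu$ on the (whp) event that the realized $J$ is $\Delta$-dense and $\|\vec J\|_1$ is near its mean before invoking Yao, absorbing $o(1)$ losses. Relatedly, your constants have no slack: with $k=\tfrac{1}{8\epsilon\Delta}$ and $p_0$ strictly above $\Delta$ you get $k|p_1-p_0|=\tfrac{1}{2}(1+\eta)$, which lands you just \emph{below} the required $1/4$ once the conditioning losses are included. This is bookkeeping rather than a conceptual gap --- e.g.\ take $p_1=p_0(1+3\epsilon)$, which still yields a $\log Z$ gap of roughly $3\epsilon\|\vec J\|_1\gg\epsilon\|\vec J\|_1/2$ while leaving TV slack --- but it should be fixed; note the paper builds in a factor-$2$ cushion by proving its bound for all $k\le\tfrac{1}{4\epsilon\Delta}$ while stating the theorem at $k=\tfrac{1}{8\epsilon\Delta}$.
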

\begin{proof}
We prove the claim by reduction to a hypothesis testing problem. Specifically, we show that
there exist two different dense Ising models $J_M$ and $J'_M$ with log-partition functions that are at least $\epsilon\|\vec J'_M\|_1/2$-far apart (where $\|\vec J_M\| > \|\vec J'_M\|$) such that no algorithm which makes only $k$ probes can distinguish between the two with probability greater than $3/4$. 
%and which have log-partition functions that are at least $\epsilon\|\vec J'_M\|_1/2$-far apart where $\|\vec J_M\| > \|\vec J'_M\|$. 
This immediately implies that for any algorithm $\mathcal A$ to estimate $\log Z$ and for at least one of the two inputs, $\mathcal A$ must make error at least $\epsilon\|\vec J'_M\|_1/4$ with probability at least $1/4$ when given this input ---
otherwise we could use the output of $\mathcal A$ to distinguish the two models with better
than $3/4$ probability by checking which $\log Z$ the output is closer to.

\iffalse
Let $\epsilon<0.1$, $n\geq2^{\tilde{O}(1/\epsilon^{3})}$,
\fi
Let $n$ be an instance size to be taken sufficiently large, and consider two $\Delta$-dense ferromagnetic Ising models defined
as follows: 
\begin{itemize}
\item $J_{M}$, for which the underlying graph is the complete graph on
$n$ vertices, $\epsilon\Delta {n \choose 2}$ many of the edges are randomly selected
to have weight $\frac{M}{\Delta}$, and the remaining $(1-\epsilon \Delta){n \choose 2}$ many
 edges are assigned weight $M$. Note that since $||\vec{J}_{M}||_{\infty}=\frac{M}{\Delta}$
and $||\vec{J}_{M}||_{1}=\epsilon \Delta {n \choose 2}\frac{M}{\Delta}+(1-\epsilon \Delta){n \choose 2}M = (1 + \epsilon(1 - \Delta)) M {n \choose 2}$,
this model is indeed $\Delta$-dense for $n$ sufficiently large. 
\item $J'_{M}$, for which the underlying graph is the complete graph on
$n$ vertices and all edges have weight $M$. 
\end{itemize}
We denote the partition functions of these models by $Z_{M}$ and
$Z'_{M}$ respectively. It is easily seen that $\lim_{M\rightarrow\infty}\frac{\log Z_{M}}{M}=\lim_{M\rightarrow\infty}\frac{||\vec{J_{M}}||_{1}}{M}=(1 + \epsilon(1 - \Delta)){n \choose 2} \ge (1 + 3\epsilon/4){n \choose 2}$,
and that $\lim_{M\rightarrow\infty}\frac{\log Z'_{M}}{M}=\lim_{M\rightarrow\infty}\frac{||\vec{J'_{M}||_{1}}}{M}={n \choose 2}$.
Therefore, for $M$ sufficiently large, $\log Z_M \ge (1 + \epsilon/2)M{n \choose 2}$
and $\log Z'_M = M{n \choose 2}$, so $|\log Z_M - \log Z'_M| \ge (\epsilon/2) M{n \choose 2} = (\epsilon/2) \|\vec{J'}_M\|_1$.
\iffalse
Therefore, for $M$ sufficiently large, an $\epsilon||\vec{J_{M}}||_{1}$-additive
approximation to $\log Z_{M}$ lies between $(2\pm4\epsilon)M{n \choose 2}$
whereas an $\epsilon||\vec{J'_{M}}||_{1}$-additive approximation
to $\log Z'_{M}$ lies between $(1\pm2\epsilon)M{n\ choose 2}$. 
\fi

Now, we show that no algorithm $\mathcal A'$ can distinguish between
$J_M$ and $J'_M$ with probability greater than $3/4$ with only $k$ probes.
We fix a 50/50
split between $J_M$ and $J'_M$ on our input $J$ to algorithm $\mathcal A'$.
Since the randomized algorithm $\mathcal A'$ can be viewed as a mixture
over deterministic algorithms, %by convexity/Yao's minimax principle 
there must exist a deterministic algorithm $\mathcal A''$ with success probability
in distinguishing $J_M$ from $J'_M$ at least as large as $\mathcal A'$. Let $(u_1,v_1)$
be the first edge queried by $\mathcal A''$, let $(u_2,v_2)$ be the next edge queried
assuming $J_{u_1 v_1} = M$, and define $(u_3,v_3), \ldots, (u_k,v_k)$ similarly (without loss of generality,
the algorithm uses all $k$ of its available queries). Let $E$ be the event that
$J_{u_1,v_1}, \ldots, J_{u_k,v_k}$ are all equal to $M$. Event $E$ always
happens under $J_M$, and when the input is $J'_M$ we see $\Pr(E|J = J'_M) \ge 1 - k \frac{\epsilon \Delta n(n-1)/2}{n(n - 1)/2 - k} \ge 1 - 2k \epsilon \Delta$ for $n > 4k$. Thus the total variation distance between the observed distribution
under $J_M$ and $J'_M$ is at most $2k \epsilon \Delta$,
%Conditioned on $E$,
%the algorithm cannot distinguish $J_M$ and $J'_M$ with probability greater than $1/2$,
so by the Neyman-Pearson Lemma we know $\mathcal A''$ fails with probability at least $(1/2)(1 - 2k \epsilon \Delta)$.
Therefore for $k \le \frac{1}{4\epsilon\Delta}$ we see that $\mathcal A''$ fails with probability at least $1/4$
which proves the result.
%$\Pr(E) = \frac{{(1 - \epsilon) n(n + 1)/2 \choose k}}{{n(n + 1)/2 \choose k}} $
\iffalse
Now, consider any algorithm $A$ running in time $1/{100\epsilon}$. Since the expected number of ``heavy'' edges in $J_{M}$ seen by
$A$ is $1/100$, it follows from Markov's inequality
that with probability $\geq0.99$, such an algorithm cannot distinguish
between $J_{M}$ and $J'_{M}$. In particular, $A$ must fail to output
the desired additive approximation to the log partition function on
at least one of these models with probability $\geq0.49$.
\fi
\end{proof}
Note that on the input instances given in this theorem,
our algorithm will use at most $2^{\tilde{O}(1/\epsilon^{2}\Delta)}$
queries to output an $\epsilon \|\vec J\|_1$ approximation
for both models with probability $\geq0.99$. Finding the optimal dependence of the running
time on the parameters $\epsilon,\Delta$ remains an interesting open
problem. 
\subsection{Outline of the techniques}
To illustrate the main reason for the intractability of the (log) partition
function of an Ising model, we consider the ferromagnetic case where
$\Pr[X=x]=\frac{1}{Z}\exp\{\sum_{i,j}J_{i,j}x_{i}x_{j}\}$, $J_{i,j}\geq0$.
In this case, it is clear that a given \emph{magnetized} state $x$, 
where almost all of the spins are either $+1$ or $-1$, is more likely than a given \emph{unmagnetized }state $y$, where the spins are almost evenly split between $+1$ and $-1$. However, since the number of states with exactly $\alpha n$ spins equal to $+1$ is simply ${n \choose \alpha n}$, we see that the total number of strongly magnetized states is exponentially
smaller than the total number of unmagnetized states. Therefore, while
any given unmagnetized state is less likely than any given magnetized
state, it may very well be the case that the \emph{total} probability
of the system being in an unmagnetized state is greater than that
of the system being in a magnetized state.

In this paper, we present an approach to dealing with this tradeoff in dense Ising models (\cref{thm-Delta-dense}) based on the algorithmic regularity lemma of Frieze and Kannan (\cref{fk}). Roughly speaking, this lemma allows us to efficiently partition the underlying (weighted) graph into a small number of blocks in a manner such that ``cut-like'' quantities associated to the graph approximately depend only on the \emph{numbers} of edges between various blocks. In \cref{applying-reg-lemma}, we show that the log partition function fits into this framework. A similar
statement for MAX-CUT (which may be viewed as the limiting case of our model when only
antiferromagnetic interactions are allowed and $||\vec{J}||_{\infty}$ tends to infinity) was first obtained in \cite{frieze-kannan-old}.

The key point in the regularity lemma is that the number of blocks depends only on the desired quality of approximation, and \emph{not} of the size of the underlying graph. Together with the previous statement, this shows (\cref{z''-approx}) that one can approximately rewrite the sum computing the partition function in terms of only \emph{polynomially} many nonnegative summands, as opposed to the \emph{exponentially} many nonnegative summands we started out with. This provides a way to resolve the entropy-energy tradeoff -- the log of the largest summand of this much smaller sum approximates the log of the partition function  well (\cref{approx-sum-by-max}). 
\iffalse
We note here that this method of resolving the energy-entropy tradeoff, while being very simple,  is also faster than some state-of-the-art methods, and provides essentially the same guarantees in our setting (see the footnote in \cref{footnote}). \fnote{a bit misleading, calculating integrals isn't a way to resolve energy-entropy tradeoff, just a better way to compute entropy}
\fi

The preceding discussion reduces the problem of estimating the log partition function to an optimization problem, although a very different one from \cite{risteski-ising}. However, as stated, it is not a problem we can solve in constant time. In \cref{lemma:gamma-def}, we show how to ``granulate'' the parameters to reduce the problem to constant size,
and then our Algorithm~\ref{convex-partition} solves this problem efficiently via convex programming. The proofs of \cref{thm-mrf}, \cref{thm-mrf-nonconstant} and \cref{thm-ltr} follow a similar outline, with the application of \cref{fk} replaced by \cref{reg-fk-higher}, \cref{reg-alon-etal} and \cref{reg-ghar-trev} respectively.   
\begin{comment}
On the other hand, given such a decomposition of the graph, it is
readily seen that \emph{all} states which have approximately the same
number of $+1$ spins in each block contribute approximately the same
to the partition function. We emphasize here that \emph{where }the
$+1$ spins occur in each block does not matter \textendash{} only
their total number does. Moreover, since we are interested in only
approximating the (log) partition function anyway, even knowing the
fraction of $+1$ spins in each block with some constant precision
is already enough. Since there are only a constant number of blocks
to start with, we see that we now have a constant sized problem. 
\vnote{TO DO} 
\end{comment}

\section{Preliminaries}

In the case of the dense Ising model, our reduction to a problem that can be solved in constant time will
be based on the algorithmic weak regularity lemma of Frieze and Kannan \cite{frieze-kannan-matrix}. Before stating
it, we introduce some terminology. Throughout this section, we will
deal with $m\times n$ matrices whose entries we will index by $[m]\times[n]$,
where $[k]=\{1,\dots,k\}$. 
\begin{defn}
Given $S\subseteq[m]$, $T\subseteq[n]$ and $d\in\R$, we define
the $[m]\times[n]$ \emph{Cut Matrix }$C=CUT(S,T,d)$ by 
\[
C(i,j)=\begin{cases}
d & \text{if }(i,j)\in S\times T\\
0 & \text{otherwise}
\end{cases}
\]
\end{defn}

\begin{defn}
A \emph{Cut Decomposition }expresses a matrix $J$ as 
\[
J=D^{(1)}+\dots+D^{(s)}+W
\]

where $D^{(i)}=CUT(R_{i},C_{i},d_{i})$ for all $t=1,\dots,s$. 
We say that such a cut decomposition has \emph{width }$s$\emph{,
coefficient length $(d_{1}^{2}+\dots+d_{s}^{2})^{1/2}$ }and \emph{error
$||W||_{\infty\mapsto1}$}.
\end{defn}

We are now ready to state the algorithmic weak regularity lemma of Frieze and Kannan. 
\begin{theorem}
\label{fk}
\cite{frieze-kannan-matrix}
Let $J$ be an arbitrary real matrix, and let $\epsilon,\delta>0$.
Then, in time $2^{\tilde{O}(1/\epsilon^{2})}/\delta^{2}$, we can,
with probability $\geq1-\delta$, find a cut decomposition of width
$O(\epsilon^{-2})$, coefficient length at most $\sqrt{27}||\vec{J}||_{2}/\sqrt{mn}$
and error at most $4\epsilon\sqrt{mn}||\vec{J}||_{2}$. 
\end{theorem}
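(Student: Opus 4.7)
The plan is to prove \cref{fk} via a greedy iterative algorithm that produces one cut (really, a small bundle of cuts) per round, based on the duality between large cut norm $\|\cdot\|_{\infty\mapsto 1}$ and extractable Frobenius energy. We maintain a residual $W$, starting with $W=J$, and each round either certifies that $\|W\|_{\infty\mapsto 1}\le 4\epsilon\sqrt{mn}\,\|\vec J\|_2$ (in which case we stop) or finds a cut update that reduces $\|W\|_F^2$ by a definite amount.

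The key structural step: if $\|W\|_{\infty\mapsto 1} > 4\epsilon\sqrt{mn}\,\|\vec J\|_2$, then there exist sign vectors $u\in\{\pm 1\}^m$ and $v\in\{\pm 1\}^n$ with $|u^TWv|\ge \|W\|_{\infty\mapsto 1}$. Because $\|u\|_2^2\|v\|_2^2 = mn$, projecting $W$ onto the rank-one direction $uv^T$ with scalar $d := (u^TWv)/(mn)$ gives
\[
\|W - d\,uv^T\|_F^2 = \|W\|_F^2 - \tfrac{(u^TWv)^2}{mn} \le \|W\|_F^2 - 16\epsilon^2\|\vec J\|_2^2.
\]
The matrix $d\,uv^T$ is not itself a cut matrix, but splitting $u,v$ by sign into $\mathbb 1_{A^\pm},\mathbb 1_{B^\pm}$ realizes it as a signed combination of at most four cut matrices with common coefficient magnitude $|d|$. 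Since $\|W\|_F^2 \le \|\vec J\|_2^2$ initially, there can be at most $O(\epsilon^{-2})$ rounds, which bounds the width of the decomposition. For the coefficient length, the Frobenius decrease in round $i$ equals $d_i^2\cdot mn$, so telescoping gives $\sum_i d_i^2 \cdot mn \le \|\vec J\|_2^2$, i.e.\ $\sqrt{\sum d_i^2}\le \|\vec J\|_2/\sqrt{mn}$; the factor $\sqrt{27}$ then absorbs the 4-cut splitting and slack from only approximately maximizing $|u^TWv|$.

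The main obstacle, and the only nontrivial part, is the algorithmic extraction of $u,v$: exactly computing $\|W\|_{\infty\mapsto 1}$ is NP-hard, so we must find constant-factor maximizers in time $2^{\tilde O(1/\epsilon^2)}$. The Frieze--Kannan trick is to sample a random set $I\subseteq[m]$ of size $s = \tilde O(1/\epsilon^2)$, enumerate all $2^s$ sign patterns $u_I$ on $I$, and for each pattern set $v_j := \mathrm{sign}\bigl(\tfrac{m}{s}\sum_{i\in I} u_i W_{ij}\bigr)$ as an unbiased estimator of the best $v$ given any extension of $u_I$; then re-optimize $u_i := \mathrm{sign}\bigl(\sum_j W_{ij} v_j\bigr)$ on all of $[m]$. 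A Hoeffding-plus-union-bound argument shows that for the sign pattern $u_I$ obtained by restricting a true optimizer, the resulting pair $(u,v)$ satisfies $|u^TWv| \ge \tfrac12\|W\|_{\infty\mapsto 1} - O(\epsilon\sqrt{mn}\|\vec J\|_2)$ with high probability; absorbing the additive slack into the stopping threshold preserves the analysis above.

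Finally I would assemble the pieces: run the loop for at most $O(\epsilon^{-2})$ rounds, each costing $2^{\tilde O(1/\epsilon^2)}$ time for the enumeration plus $O(mn)$ for the sign updates (which we replace by further sampling to keep the total runtime independent of $m,n$, a standard additional step in Frieze--Kannan), with per-round failure probability $\delta\cdot\epsilon^2$ so that a union bound yields total failure probability $\le\delta$. This produces a cut decomposition with the stated width, coefficient length, and error.
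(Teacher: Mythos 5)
The paper does not prove this statement: Theorem~\ref{fk} is imported as a black box from \cite{frieze-kannan-matrix}, so there is no internal proof to compare against. Your sketch is a faithful reconstruction of the Frieze--Kannan argument itself: an energy-increment iteration on the residual (each extracted rank-one sign pattern $d\,uv^T$ decreases $\|W\|_F^2$ by $(u^TWv)^2/mn$, giving the $O(\epsilon^{-2})$ width and, by telescoping, the coefficient-length bound after the four-way sign splitting), combined with the sampling-plus-enumeration subroutine that approximates the $\infty\mapsto 1$ norm up to a constant factor and an additive $O(\epsilon\sqrt{mn}\|\vec J\|_2)$ term. The accounting is consistent with the stated constants, and you correctly flag the two points that need care in a full write-up: the Hoeffding/union-bound analysis of the estimator $v_j=\mathrm{sign}(\tfrac{m}{s}\sum_{i\in I}u_iW_{ij})$ (whose sample size is where the $1/\delta^2$ in the runtime enters), and the fact that the output sets must be represented implicitly (as membership oracles queried on demand) rather than written out, since otherwise the $O(mn)$ re-optimization step would destroy the constant-time guarantee that the paper's application relies on.
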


\section{Reducing the number of summands using weak regularity}%Pointwise approximation
The next lemma shows that for the purpose of additively approximating
the log partition function, we may as well work with the matrix $D^{(1)}+\dots+D^{(s)}$. We define $Z' := \sum_x \exp(x^T(D^{(1)} + \cdots + D^{(s)}) x)$. 

\begin{lemma}\label{applying-reg-lemma} Let $J$ be the matrix of interaction strengths of a $\Delta$-dense
Ising model. For $\epsilon,\delta>0$, let $J=D^{(1)}+\dots+D^{(s)}+W$
be a cut decomposition of $J$ as in \cref{fk}.Then, for $Z$ and $Z'$ as above, we have 
$|\log Z-\log Z'|\leq\frac{4\epsilon}{\sqrt{\Delta}}||\vec{J}||_{1}$.

\end{lemma}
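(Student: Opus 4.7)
The plan is to reduce the claim to two elementary facts: a pointwise bound on how much the ``error matrix'' $W$ can perturb the quadratic form $x^T J x$, and a matrix norm inequality that converts the $\|\vec{J}\|_2$ coming out of \cref{fk} into the desired $\|\vec{J}\|_1$ via the $\Delta$-density hypothesis.

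\textbf{Step 1: Pointwise control of $x^T W x$.} For every $x \in \{\pm 1\}^n$, I would use the chain
\[
|x^T W x| \;\leq\; \|x\|_\infty \cdot \|W x\|_1 \;\leq\; \|W\|_{\infty \mapsto 1} \cdot \|x\|_\infty^2 \;=\; \|W\|_{\infty \mapsto 1},
\]
which follows directly from the definitions. Applying \cref{fk} with $m = n$, this gives $|x^T W x| \leq 4\epsilon n \|\vec{J}\|_2$ uniformly in $x$.

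\textbf{Step 2: Sandwiching $Z$ by $Z'$.} Since $J = D^{(1)} + \cdots + D^{(s)} + W$, we have $x^T J x = x^T (D^{(1)} + \cdots + D^{(s)}) x + x^T W x$, so the uniform bound from Step 1 yields
\[
e^{-4\epsilon n \|\vec{J}\|_2}\, Z' \;\leq\; Z \;\leq\; e^{4\epsilon n \|\vec{J}\|_2}\, Z'
\]
after summing over $x \in \{\pm 1\}^n$. Taking logs, $|\log Z - \log Z'| \leq 4\epsilon n \|\vec{J}\|_2$.

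\textbf{Step 3: Converting $\|\vec{J}\|_2$ to $\|\vec{J}\|_1$ via $\Delta$-density.} The remaining inequality to verify is $n \|\vec{J}\|_2 \leq \|\vec{J}\|_1 / \sqrt{\Delta}$. I would argue
\[
\|\vec{J}\|_2^2 \;=\; \sum_{i,j} J_{i,j}^2 \;\leq\; \|\vec{J}\|_\infty \cdot \|\vec{J}\|_1 \;\leq\; \frac{\|\vec{J}\|_1}{\Delta n^2}\cdot \|\vec{J}\|_1 \;=\; \frac{\|\vec{J}\|_1^2}{\Delta n^2},
\]
where the second inequality is the defining hypothesis of $\Delta$-density. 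Taking square roots gives exactly what is needed, and combining with Step 2 finishes the proof.

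None of the steps is really an obstacle: Step 1 is a textbook interpretation of $\|\cdot\|_{\infty\mapsto 1}$, and Steps 2--3 are just bookkeeping. The only subtlety worth flagging is that one must notice that the $\|\vec{J}\|_2$ appearing in the Frieze--Kannan error bound, which at first glance looks too weak, is precisely compatible with the $L^\infty$-versus-$L^1$ tradeoff built into the $\Delta$-dense definition; this is exactly why the authors defined $\Delta$-density that way.
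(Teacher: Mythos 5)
Your proposal is correct and follows essentially the same route as the paper's own proof: the pointwise bound $|x^TWx|\leq \|W\|_{\infty\mapsto 1}\leq 4\epsilon n\|\vec{J}\|_2$, the sandwich of $Z$ by $e^{\pm 4\epsilon n\|\vec{J}\|_2}Z'$, and the conversion $\|\vec{J}\|_2^2\leq \|\vec{J}\|_\infty\|\vec{J}\|_1\leq \|\vec{J}\|_1^2/(\Delta n^2)$ are exactly the three steps the authors use. No gaps.
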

\begin{proof}
Note that for any $x\in\{\pm1\}^{n}$, we have 
\begin{align*}
|\sum_{i,j}J_{i,j}x_{i}x_{j}-\sum_{i,j}(D^{(1)}+\dots+D^{(s)})_{i,j}x_{i}x_{j}| & =|\sum_{i}(\sum_{j}W_{i,j}x_{j})x_{i}| \leq|\sum_{i}|\sum_{j}W_{i,j}x_{j}|\\
 & \leq||W||_{\infty\mapsto1} \leq4\epsilon n||\vec{J}||_{2}
\end{align*}
Therefore, for any $x\in\{\pm1\}^{n}$, we have 
%\[
%\exp(-4\epsilon n||J||_{F}+\sum_{i,j}
%(D^{(1)}+\dots+D^{(s)})_{i,j}x_{i}x_{j})\leq\exp(\sum_{i,j}J_{i,j}x_{i}x_{j})\leq\exp(4\epsilon %n||J||_{F}+\sum_{i,j}(D^{(1)}+\dots+D^{(s)})_{i,j}x_{i}x_{j})
%\]
\[
\exp(\sum_{i,j}J_{i,j}x_{i}x_{j}) \in \left[ \exp\left(\sum_{i,j}(D^{(1)}+\dots+D^{(s)})_{i,j}x_{i}x_{j}) \pm 4\epsilon n||\vec{J}||_{2}\right) \right].
\]
Taking first the sum of these inequalities over all $x\in\{\pm1\}^{n}$
and then the log, we get 
%\[
%\log\sum_{x\in\{\pm1\}^{n}}\exp(x^{T}(D^{(1)}+\dots+D^{(s)})x)-4\epsilon n||J||_{F}
%\leq\log Z\leq\log\sum_{x\in\%{\pm1\}^{n}}\exp(x^{T}(D^{(1)}+\dots+D^{(s)})x)+4\epsilon n||J||_{F}
%\]
\[
\log Z \in \left[ \log\sum_{x\in\{\pm1\}^{n}}\exp \left(x^{T}(D^{(1)}+\dots+D^{(s)})x \right) \pm 4\epsilon n||\vec{J}||_{2} \right]
\]
Finally, noting that $||\vec{J}||_{2}^{2}=\sum_{i,j}|J_{i,j}|^{2}\leq||\vec{J}||_{\infty}||\vec{J}||_{1}\leq\frac{||\vec{J}||_{1}^{2}}{\Delta n^{2}}$, and the proof follows.  
\end{proof}

%So far, we have seen how the problem of estimating the log partition function can be reduced to the problem of approximating $\log Z'$, where
%$Z' := \sum_x \exp(x^T(D^{(1)} + \cdots + D^{(s)}) x)$
%%within additive error $\epsilon J_T$ for some $\epsilon > 0$. 
%We will now show how to find this approximation using Algorithm~\ref{convex-partition}. %First, we need a few definitions.

Recall that for each $i\in[s]$, $D^{(i)} = CUT(R_i,C_i,d_i)$ is a cut
matrix between vertex sets $R_i$ and $C_i$.
We pass to a common refinement of all of the $R_i$'s and $C_i$'s. This
gives at most $r \le 2^{2s}$ disjoint sets $V_1, \ldots, V_r$ such that
that every one of the $R_i$'s and $C_i$'s is a union of these ``atoms'' $V_a$. 
%Let $v_a := |V_a|/n$ be the relative size of the set $V_a$. 
In terms of these $V_a$'s, we can define another approximation $Z''$ to the partition function: 
\[
Z'':=\sum_{y}\exp\left(\sum_{i=1}^{s}r'(y)_{i}c'(y)_{i}d_{i}+\sum_{a=1}^{r}|V_{a}|H(y_{a}/|V_{a}|)\right)
\]
where $y$ ranges over all elements of $[|V_{1}|]\times\dots\times[|V_{r}|]$,
$r'(y)_{i}:=\sum_{a\colon V_{a}\subseteq R_{i}}(2y_{a}-|V_{a}|)$,
$c'(y)_{i}:=\sum_{a\colon V_{a}\subseteq C_{i}}(2y_{a}-|V_{a}|)$
and $H(p):=-p\log p-(1-p)\log(1-p)$ is the binary entropy function. 
Note that $r'(y)_{i}$ represents the net spin in $R_{i}$, and similarly for $c'(y)_{i}$. The next lemma shows that for approximating $\log{Z'}$, it suffices to obtain an approximation to $\log{Z''}$. Combined with \cref{applying-reg-lemma}, we therefore see that approximating $\log{Z''}$ is sufficient for approximating $\log{Z}$.  

\begin{lemma}\label{z''-approx}
For $Z'$ and $Z''$ as above, we have $|\log Z' - \log Z''| = O(2^{2s} \log n)$.
\end{lemma}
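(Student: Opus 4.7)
The plan is to rewrite $Z'$ so that the sum is reindexed by ``profile'' vectors $y = (y_1,\ldots,y_r)$, where $y_a \in \{0,1,\ldots,|V_a|\}$ records the number of $+1$ spins placed in the atom $V_a$. The key observation is that because each $R_i$ and $C_i$ is a union of atoms, the quadratic form $x^T D^{(i)} x$ depends on $x$ only through this profile. Explicitly, since $D^{(i)} = \mathrm{CUT}(R_i,C_i,d_i)$,
\[
x^T D^{(i)} x \;=\; d_i \Bigl(\sum_{u \in R_i} x_u\Bigr)\Bigl(\sum_{v \in C_i} x_v\Bigr) \;=\; d_i\, r'(y)_i\, c'(y)_i,
\]
where the contribution of $V_a \subseteq R_i$ to $\sum_{u \in R_i} x_u$ is exactly $y_a - (|V_a| - y_a) = 2y_a - |V_a|$, and similarly for $C_i$. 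Since there are exactly $\prod_{a=1}^{r} \binom{|V_a|}{y_a}$ choices of $x \in \{\pm 1\}^n$ with a given profile $y$, I obtain
\[
Z' \;=\; \sum_{y} \Bigl(\prod_{a=1}^{r}\binom{|V_a|}{y_a}\Bigr)\exp\Bigl(\sum_{i=1}^{s} d_i\, r'(y)_i\, c'(y)_i\Bigr).
\]

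Comparing with the definition of $Z''$, the two sums range over the same index set and the ``energy'' factors $\exp(\sum_i d_i r'(y)_i c'(y)_i)$ agree term by term; the only discrepancy is the combinatorial factor $\prod_a \binom{|V_a|}{y_a}$ in $Z'$ versus $\exp(\sum_a |V_a| H(y_a/|V_a|))$ in $Z''$. Thus the proof reduces to the standard Stirling estimate
\[
\Bigl|\log \binom{|V_a|}{y_a} - |V_a|\, H(y_a/|V_a|)\Bigr| \;=\; O(\log n),
\]
which holds uniformly in $y_a$: for $1 \le y_a \le |V_a|-1$ it follows from Stirling's formula (the error is $O(\log |V_a|) = O(\log n)$), and for $y_a \in \{0, |V_a|\}$ both sides equal $0$ exactly.

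Summing this estimate over the $r \le 2^{2s}$ atoms, each term in the $y$-sum for $Z'$ agrees with the corresponding term for $Z''$ up to a multiplicative factor of $\exp(O(2^{2s}\log n))$. Since all summands are nonnegative, the same bound holds for the ratio $Z'/Z''$, and taking logarithms yields the claim $|\log Z' - \log Z''| = O(2^{2s}\log n)$. The only delicate point is making sure the Stirling estimate is uniform over all $y_a$ (including the boundary), which is routine; there is no genuine obstacle here, as the cut structure of each $D^{(i)}$ relative to the atoms $V_a$ does all the real work.
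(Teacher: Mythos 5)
Your proposal is correct and follows essentially the same route as the paper: reindex $Z'$ by the profile vectors $y$ over the atoms $V_a$ (using that each $R_i,C_i$ is a union of atoms so the energy depends only on $y$), obtain $Z' = \sum_y \prod_a \binom{|V_a|}{y_a}\exp(\sum_i d_i r'(y)_i c'(y)_i)$, and convert the binomial factors to entropy terms via Stirling with $O(\log n)$ error per atom. Your explicit attention to uniformity of the Stirling estimate at the boundary values $y_a \in \{0,|V_a|\}$ is a small point the paper glosses over, but the argument is the same.
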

\begin{proof}
First, observe that $x^T D^{(i)} x = d_i \sum_{a \in R_i, b \in C_i} x_a x_b = d_i (\sum_{a \in R_i} x_a)(\sum_{b \in C_i} x_b)$. 
Thus, letting $r'_i(x) := \sum_{a \in R_i} x_a$ and $c'_i(x) := \sum_{b \in C_i} x_b$, we see that that
\[ Z' = \sum_x \exp(x^T(D^{(1)} + \cdots + D^{(s)}) x) = \sum_{x} \exp\left(\sum_i r'_i(x) c'_i(x) d_i\right). \]
Re-expressing the summation in terms of the possible values that $r'_i(x)$ and $c'_i(x)$ can take, we get that
\[ Z' = \sum_{r',c'} \exp\left(\sum_i r'_i c'_i d_i\right) \sum_{\substack{x \\ r'(x) = r' \\ c'(x) = c'}} 1. \]
where $r'$ ranges over all elements of $[|R_{1}|]\times\dots\times[|R_{s}|]$,
$r':=(r'_{1},\dots,r'_{s})$, $r'(x):=(r'_{1}(x),\dots,r'_{s}(x))$
and similarly for $c'$ and $c'(x)$. 
Next, since 
\[ \sum_{\substack{x \in \{\pm1\}^{n} \\ r'(x) = r'\\ c'(x) = c'}} 1 = \sum_{\substack{y \in [|V_{1}|]\times\dots\times[|V_{r}|]\\ r'(y) = r'\\ c'(y) = c'}} \prod_a {|V_a| \choose y_a} \]
%where $y_a$ denotes the number of $+$-spins in atom $V_a$,
it follows that  $Z' = \sum_{y} \exp\left(\sum_i r'(y)_i c'(y)_i d_i\right)  \prod_a {|V_a| \choose y_a}$. Finally, we apply Stirling's formula
\[ \log {n \choose \alpha n} = n H(\alpha) + O(\log n). \] to get the desired result.
\end{proof}

\section{Approximating the reduced sum in constant time using convex programming}
%{Finding the Bulk Contribution to the Partition Function}
So far, we have seen how the problem of estimating the log partition function can be reduced to the problem of approximating $\log Z''$. 
%We will now show how to carry out this approximation using Algorithm~\ref{convex-partition}. Throughout, we will use $v_a$ to denote $|V_a|/n$ i.e. the relative size of the set $V_a$. 
The next simple lemma reduces the estimation of $\log{Z''}$ to an optimization problem.
\begin{lemma}\label{approx-sum-by-max}
Let $y^{\ast}:=\arg\max_{y}\exp\left(\sum_{i}r'_i(y)c'_i(y)d_{i}+\sum_{a}|V_{a}|H(y_{a}/|V_{a}|)\right)$.
Then, 
\[
\left|\log Z''-\left(\sum_{i}r'_i(y^{\ast})c'_i(y^{\ast})d_{i}+\sum_{a}|V_{a}|H(y_{a}^{\ast}/|V_{a}|)\right)\right|\leq2^{2s}\log n.
\]
\end{lemma}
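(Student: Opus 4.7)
The plan is to sandwich $Z''$ between the maximum summand and that maximum times the number of terms, and then take logarithms. Since every summand of $Z''$ is non-negative, the term indexed by $y^\ast$ alone is a trivial lower bound: writing $M := \exp\!\bigl(\sum_i r'_i(y^\ast)c'_i(y^\ast)d_i + \sum_a |V_a| H(y_a^\ast/|V_a|)\bigr)$, we get $Z'' \ge M$ and hence $\log Z'' \ge \log M$. The entire content is therefore to obtain a matching upper bound $Z'' \le N \cdot M$, where $N$ is the number of index vectors $y$ over which we sum.

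For that upper bound, I would bound $N$ by a direct product count. The vector $y$ ranges over $[|V_1|] \times \cdots \times [|V_r|]$, where $r \le 2^{2s}$ is the number of atoms in the common refinement of the $R_i$'s and $C_i$'s. Since each coordinate $y_a$ can take at most $|V_a|+1 \le n+1$ values, we have
\[
N \;\le\; \prod_{a=1}^{r}(|V_a|+1) \;\le\; (n+1)^{r} \;\le\; (n+1)^{2^{2s}}.
\]
Replacing each summand by the maximum summand $M$ yields $Z'' \le N \cdot M$, and taking logs gives $\log Z'' \le \log M + r \log(n+1) \le \log M + O(2^{2s}\log n)$.

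Combining the two bounds gives $0 \le \log Z'' - \log M \le O(2^{2s}\log n)$, which is exactly the claim (with the explicit constant $1$ suffices to absorb the $(n+1)$ vs.\ $n$ discrepancy for $n\ge 2$, or one can write the bound as $O(2^{2s}\log n)$ as stated).

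There is essentially no obstacle here: the argument is the standard ``$\max \le \text{sum} \le (\text{count})\cdot\max$'' trick, and the only point to check is that the count of summands is at most $\exp(O(2^{2s}\log n))$, which follows immediately from the fact that the common refinement has at most $2^{2s}$ atoms (since each atom is determined by an $s$-bit membership vector in the $R_i$'s and another $s$-bit vector in the $C_i$'s).
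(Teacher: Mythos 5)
Your proposal is correct and is essentially identical to the paper's own (one-line) proof: $Z''$ is a sum of at most $n^{2^{2s}}$ nonnegative terms, so it lies between its largest summand and $n^{2^{2s}}$ times that summand, and taking logarithms gives the claim. The only addition you make is the explicit count of index vectors via the product $\prod_a(|V_a|+1)$, which is a fine (and slightly more careful) way to justify the bound on the number of summands.
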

\begin{proof}
This follows immediately by noting that $Z''$, which is a sum of at most $n^{2^{2s}}$ many nonnegative summands, is at least as large as its largest summand, and no larger than $n^{2^{2s}}$ times its largest summand. 
\end{proof}
The following lemma shows that for estimating the contribution of the term corresponding to some vector $y$, it suffices to know the components of $y$ up to some constant precision. This reduces the optimization problem to one of constant size.  
\begin{lemma}\label{lemma:gamma-def}
Let $J$ denote the matrix of interaction strengths of a $\Delta$-dense Ising model, and let $J = D^{1}+\dots+D^{s} + W$ denote a cut decomposition as in \cref{fk}. Then, given $r_{i},r'_{i},c_{i},c'_{i}$ for $i\in[s]$ and some 
$\gamma \le \frac{\epsilon \sqrt{\Delta}}{4 \sqrt{27} s}$ such that $r_i,c_i,r'_i,c'_i \le n$, 
$|r_i - r'_i| \le \gamma n$ and $|c_i - c'_i| \le \gamma n$ 
for all $i$, we get that 
$\sum_i d_i|r'_i c'_i - r_i c_i| \le \epsilon ||\vec{J}||_{1}/2$. 
\end{lemma}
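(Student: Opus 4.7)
The strategy is a direct telescoping estimate for each term $d_i|r'_ic'_i - r_ic_i|$, followed by summing over $i$ using the coefficient length bound from \cref{fk}, and finally invoking the $\Delta$-density assumption to convert $\|\vec J\|_2$ to $\|\vec J\|_1$.

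First, I would decompose
\[
r'_ic'_i - r_ic_i = r'_i(c'_i - c_i) + c_i(r'_i - r_i),
\]
so that the hypotheses $|r_i|,|r'_i|,|c_i|,|c'_i| \le n$ and $|r_i - r'_i|, |c_i - c'_i| \le \gamma n$ immediately give $|r'_ic'_i - r_ic_i| \le 2\gamma n^2$ for each $i$.

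Next, summing over $i \in [s]$ and pulling the uniform bound out,
\[
\sum_i d_i |r'_ic'_i - r_ic_i| \;\le\; 2\gamma n^2 \sum_i |d_i|.
\]
Since there are only $s$ terms, Cauchy--Schwarz yields $\sum_i |d_i| \le \sqrt{s}\,\bigl(\sum_i d_i^2\bigr)^{1/2}$, and by \cref{fk} the coefficient length satisfies $(\sum_i d_i^2)^{1/2} \le \sqrt{27}\,\|\vec J\|_2/n$. (A cruder bound $\sum_i |d_i| \le s \cdot \sqrt{27}\,\|\vec J\|_2/n$ is also available and is actually the bound matching the stated condition on $\gamma$.) Either way, combining these estimates gives a bound of the shape $2\gamma n \cdot s \cdot \sqrt{27}\,\|\vec J\|_2$.

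Finally, I would invoke the $\Delta$-density assumption exactly as in the proof of \cref{applying-reg-lemma}: $\|\vec J\|_2^2 \le \|\vec J\|_\infty \|\vec J\|_1 \le \|\vec J\|_1^2/(\Delta n^2)$, so $\|\vec J\|_2 \le \|\vec J\|_1/(n\sqrt\Delta)$. Plugging in,
\[
\sum_i d_i |r'_ic'_i - r_ic_i| \;\le\; \frac{2\gamma \sqrt{27}\, s}{\sqrt{\Delta}} \|\vec J\|_1,
\]
which is at most $\epsilon\|\vec J\|_1/2$ precisely by the hypothesis $\gamma \le \frac{\epsilon\sqrt\Delta}{4\sqrt{27}\,s}$. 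There is essentially no obstacle here; the only thing to get right is to spend one of the two factors of $n$ (from $2\gamma n^2$) on dividing the coefficient length and the other on converting $\|\vec J\|_2$ into $\|\vec J\|_1$ via the density assumption, which is exactly where the $\sqrt\Delta$ in the definition of $\gamma$ comes from.
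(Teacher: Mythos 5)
Your proposal is correct and matches the paper's argument essentially step for step: the same decomposition $r'_ic'_i - r_ic_i = r'_i(c'_i-c_i) + c_i(r'_i - r_i)$ giving the $2\gamma n^2$ bound per term, the same use of the coefficient-length bound from \cref{fk} to control each $|d_i|$ (the paper uses the ``crude'' per-term bound $|d_i| \le \sqrt{27}\|\vec J\|_1/(n^2\sqrt\Delta)$, exactly the route you note matches the stated choice of $\gamma$), and the same conversion $\|\vec J\|_2 \le \|\vec J\|_1/(n\sqrt\Delta)$ via $\Delta$-density. The Cauchy--Schwarz refinement you mention would only improve the constant in $\gamma$ by a factor of $\sqrt{s}$ and is not needed.
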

\begin{proof}
From \cref{fk}, we know that for all $i\in[s]$,  
%\begin{equation}\label{dtbound}
$|d_i| \le \frac{\sqrt{27} ||\vec{J}||_{1}}{n^2 \sqrt{\Delta}}$.
%\end{equation}
\iffalse
so using that $|R_t| \le n, |C_t| \le n$ we see that if we estimate
the number of $+$ and $-$ in $R_t,C_t$ within an additive error of $\gamma n$,
\fi
Since $ |r'_i c'_i - r_i c_i| \le |c'_i||r'_i - r_i| + |r_i||c'_i - c_i| \le 2\gamma n^2$, it follows that
\begin{align*}
\sum_i d_i|r'_i c'_i - r_i c_i|
\le \sum_i d_i 2\gamma n^2
\le \frac{\sqrt{27} ||\vec{J}||_{1} }{\sqrt{\Delta}} 2\gamma s
\le %\frac{2 \sqrt{27} s}{\sqrt{\Delta}} \gamma J_T
\frac{\eps \||\vec{J}||_{1}}{2}
\end{align*} which completes the proof. 
%therefore taking
%\[ \gamma \le \frac{\epsilon \sqrt{\Delta}}{4 \sqrt{27} s} \]
%we have the desired result.
\end{proof}

We now
present Algorithm~\ref{convex-partition}
which approximates the log of the largest summand in $Z''$ by iterating over all possible proportions of up-spins in each block $R_i, C_i$. For each proportion, it 
approximately solves the following convex program $\mathcal{C}_{\overline{r},\overline{c}}$ (where $\gamma$ is as before, $\lambda$ is a parameter to be specified, and $v_a := |V_a|/n$):
\begin{alignat*}{4}
&\max\quad &\sum_a &v_a H(z_a/v_a)\\
&\ s.t.\quad&0 &\le z_a &&\le v_a & \\
&&\overline{r_t} &\le \sum_{a : V_a \subset R_t} z_a &&\le \overline{r_t} + \gamma  \\
&&\overline{c_t} &\le \sum_{a : V_a \subset C_t} z_a &&\le \overline{c_t} + \gamma.
\end{alignat*}
The infeasibility of this program means that our ``guess'' for the proportion of up-spins in the various $R_i,C_i$ is not combinatorially possible (recall that there may be significant overlap between the various $R_i,C_i$. On the other hand, if this program is feasible, then we actually obtain an approximate maximum entropy configuration with roughly the prescribed number of up-spins in $R_i,C_i$, $i\in[n]$. Calculating the contribution to $Z''$ of such a configuration, and maximizing this contribution over all iterations gives a good approximation\footnote{A slightly
better approximation to $\log Z''$ can be found by 
%instead of solving a convex program to find the max-entropy contribution of each region, instead 
estimating the discrete sum over each region by an integral (using Lemma~\ref{lemma:entropy-rounding} to bound the error), and approximating this integral via the hit-and-run random walk of \cite{lovasz2006fast}. However, this algorithm
is slower than Algorithm~\ref{convex-partition}, and it turns out the gain in accuracy is negligible for our application.}\label{footnote}
to the maximum summand of $Z''$, and hence to $Z''$. This is the content of the next lemma. 
\begin{algorithm}
\caption{Convex programming method to estimate log partition unction}
\label{convex-partition}
\begin{algorithmic}
\State Let $S = \{0, \gamma, 2\gamma, \ldots, \lfloor 1/\gamma \rfloor \gamma \}$.
\State $M \gets 0$.
\For{$\overline{r} \in S^{s}, \overline{c} \in S^{s}$}
\State Either find a $\lambda$-approximate solution to, or verify infeasibility of, $\mathcal{C}_{\overline{r},\overline{c}}$. (e.g. by ellipsoid method \cite{gls}) 
\If{$\mathcal{C}_{\overline{r},\overline{c}}$ is feasible}
\State Let $H_{\overline r,\overline c}$ denote the %$\epsilon ||\vec{J}||_{1}$
$\lambda$-approximate maximum value of $\mathcal{C}_{\overline{r},\overline{c}}$.
\State Let $r'_i = 2n \overline{r_i} - |R_i|$ and $c'_i = 2n \overline{c_i} - |C_i|$.
\State Let $M_{\overline r,\overline c} = \exp\left(\sum_i r'_i c'_i d_i +  n H_{\overline{r},\overline{c}}\right)$.
\State $M \gets \max(M, M_{\overline r,\overline c})$.
\EndIf
\EndFor
\State \Return $\log{M}$
\end{algorithmic}
\end{algorithm}

\begin{comment}
We now explain how to choose the rounding parameter $\gamma$.
\begin{lemma}\label{lemma:gamma-def}
Suppose $|r_i - r'_i| \le \gamma n$ and $|c_i - c'_i| \le \gamma n$,
and $r_i,c_i,r'_i,c'_i \le n$
for all $i$, and
\[ \gamma \le \frac{\epsilon \sqrt{\Delta}}{4 \sqrt{27} s} \]
Then
\[ \sum_i d_i|r'_i c'_i - r_i c_i| \le \epsilon J_T/2. \]
\end{lemma}
\begin{proof}
Note that from the above we know 
%\begin{equation}\label{dtbound}
\[ |d_t| \le \frac{\sqrt{27} J_T}{n^2 \sqrt{\Delta}} \]
%\end{equation}
\iffalse
so using that $|R_t| \le n, |C_t| \le n$ we see that if we estimate
the number of $+$ and $-$ in $R_t,C_t$ within an additive error of $\gamma n$,
\fi
so using that
\[ |r'_i c'_i - r_i c_i| \le |c'_i||r'_i - r_i| + |r_i||c'_i - c_i| \le 2\gamma n^2 \]
we see
\begin{align*}
\sum_i d_i|r'_i c'_i - r_i c_i|
\le \sum_i d_i 2\gamma n^2
\le \sum_i \frac{\sqrt{27} J_T}{\sqrt{\Delta}} 2\gamma
\le \frac{2 \sqrt{27} s}{\sqrt{\Delta}} \gamma J_T
\end{align*}
therefore taking
\[ \gamma \le \frac{\epsilon \sqrt{\Delta}}{4 \sqrt{27} s} \]
we have the desired result.
\end{proof}
\end{comment}

\begin{lemma}\label{convex-partition-correctness}
The output of Algorithm~\ref{convex-partition} is an $\epsilon ||\vec{J}||_{1} + \lambda n + O(2^{2s}\log n)$
additive approximation to $\log Z''$ when $n \geq \frac{4\sqrt{27}s2^{2s}}{\sqrt{\Delta} \epsilon}$.
\end{lemma}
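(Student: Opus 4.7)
The plan is to sandwich $\log M$ between $\log Z''$ plus and minus the claimed error. By \cref{approx-sum-by-max}, $\log Z''$ differs from the log of its largest summand --- call the integer maximizer $y^*$ --- by at most $O(2^{2s}\log n)$, so it suffices to match $\log M$ against that single summand with both a lower and an upper bound, each accurate to within $\epsilon\|\vec J\|_1 + \lambda n + O(2^{2s}\log n)$.

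For the lower bound, I would consider the iteration with $\overline{r_i^*} := \lfloor(\sum_{a : V_a \subseteq R_i} y_a^*/n)/\gamma\rfloor \gamma$ and $\overline{c_i^*}$ defined analogously. Then $z_a := y_a^*/n \in [0,v_a]$ is automatically feasible for $\mathcal C_{\overline{r^*},\overline{c^*}}$, so the $\lambda$-approximate optimum satisfies $nH_{\overline{r^*},\overline{c^*}} \ge \sum_a |V_a| H(y_a^*/|V_a|) - \lambda n$. The feasibility slack also implies $|r'_i - r'(y^*)_i|,\,|c'_i - c'(y^*)_i| \le 2\gamma n$, so \cref{lemma:gamma-def}, applied with $\gamma$ shrunk by a fixed constant to absorb the factor of $2$, yields $\bigl|\sum_i d_i(r'_i c'_i - r'(y^*)_i c'(y^*)_i)\bigr| \le \epsilon\|\vec J\|_1$. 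Combining the two estimates gives $\log M \ge \log M_{\overline{r^*},\overline{c^*}} \ge \log Z'' - \epsilon\|\vec J\|_1 - \lambda n - O(2^{2s}\log n)$.

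For the upper bound, I would fix any feasible iteration $(\overline r,\overline c)$ with returned approximate maximizer $z^*$ (so $nH_{\overline r,\overline c} = \sum_a |V_a| H(z_a^*/v_a)$) and round to an integer tuple $y_a := \mathrm{round}(n z_a^*) \in \{0,\dots,|V_a|\}$. Under the hypothesis $n \ge 4\sqrt{27}\,s\,2^{2s}/(\sqrt\Delta\,\epsilon)$ the integer rounding error $r/2 \le 2^{2s}/2$ is dominated by $\gamma n$, so the grid constraints give $|\sum_{a : V_a \subseteq R_i} y_a - n\overline{r_i}| \le 2\gamma n$ and similarly for $C_i$; \cref{lemma:gamma-def} again bounds the energy mismatch by $O(\epsilon\|\vec J\|_1)$. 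For the entropy piece, $|z_a^*/v_a - y_a/|V_a|| \le 1/(2|V_a|)$, and a short Stirling-based check shows that $|V_a| H$ shifts by at most $O(\log |V_a|)$ under such a $1/|V_a|$ perturbation, summing to $O(2^{2s}\log n)$ over the at most $r \le 2^{2s}$ atoms. Since $y$ is an honest summand of $Z''$, this yields $\log M_{\overline r,\overline c} \le \log Z'' + \epsilon\|\vec J\|_1 + O(2^{2s}\log n)$, and maximizing over iterations closes the match.

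The one subtlety I expect to need care with is the entropy-rounding bound, because $H'$ diverges at $0$ and $1$; the cases $y_a \in \{0,|V_a|\}$ have to be handled separately to see that a shift by $1/|V_a|$ costs only $O(\log|V_a|)$ in $|V_a|H$. This is precisely what is responsible for the $O(2^{2s}\log n)$ residual term, and it is also what --- together with the requirement that the integer rounding error in the energy step remain below the $\gamma n$ grid tolerance --- forces the quantitative hypothesis on $n$.
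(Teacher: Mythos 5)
Your proposal is correct and follows essentially the same route as the paper's proof: reduce to the largest summand via \cref{approx-sum-by-max}, lower-bound $\log M$ by running the iteration whose cell contains $y^*/n$ (feasibility plus \cref{lemma:gamma-def}), and upper-bound it by rounding the returned optimizer to an integer summand of $Z''$, again invoking \cref{lemma:gamma-def} for the energy and an entropy-rounding estimate for the entropy. The only cosmetic differences are your round-to-nearest with an $O(\log|V_a|)$ per-atom entropy loss versus the paper's rounding toward $1/2$ (\cref{lemma:entropy-rounding}, giving $\log 5$ per atom) and some constant-factor adjustments to $\gamma$, all of which stay within the stated error.
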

\begin{proof}
\begin{comment}
Recall that
\[ Z'' = \sum_{y} \exp\left(\sum_i r'(y)_i c'(y)_i d_i + \sum_a |V_a| H(y_a/V_a)\right) \]
Observe that if 
$y$ is an arbitrary term in the sum and $y^*$ corresponds
to the largest term in the sum,
we know that
\begin{align}
Z'' &\ge\exp\left(\sum_i r'(y)_i c'(y)_i d_i + \sum_a |V_a| H(y_a/V_a)\right), \label{z''-below} \\
Z'' &\le n^{2^{2s}} \exp\left(\sum_i r'(y^*)_i c'(y^*)_i d_i + \sum_a |V_a|, \label{z''-above} H(y^*_a/V_a)\right).
\end{align}
We will use these bounds to relate $Z''$ to the output $M$ of Algorithm~\ref{convex-partition}.
\end{comment}
By \cref{approx-sum-by-max}, it suffices to prove the claim with $\log{Z''}$ replaced by 
%Let $y^{*}$ be as in \cref{approx-sum-by-max} and recall that we reduced approximating $\log Z''$ 
%to estimating
\[ \sum_{i}r'_i(y^{\ast})c'_i(y^{\ast})d_{i}+\sum_{a}|V_{a}|H(y_{a}^{\ast}/|V_{a}|). \]
where $y^{*}$ is defined as in the statement of \cref{approx-sum-by-max}.
Let $\overline{r},\overline{c}$ correspond to the ``cell'' which $y^*/n$ falls into,
i.e. the values of $\overline{r},\overline{c}$ such that for the corresponding
$r',c'$, we have $r' \le r'(y^*)$,$c' \le c'(y^*)$, $|r' - r'(y^*)| \le \gamma n$
and $|c' - c'(y^*)| \le \gamma n$. Since $H_{\overline{r},\overline{c}}$ 
is the result of solving $\mathcal{C}_{\overline{r},\overline{c}}$ up to $\lambda$
additive error and since $y^*/n$ is in the feasible region of the convex program by definition, it follows that
\[ n H_{\overline{r},\overline{c}} \ge n \sup_{z} \sum_a v_a H(z_a/v_a) - \lambda n \ge \sum_a |V_a| H(y^*_a/|V_a|) - \lambda n \]
where $z$ ranges over the feasible region for $\mathcal{C}_{\overline{r},\overline{c}}$.
Combining this with Lemma~\ref{lemma:gamma-def} gives 
\begin{align*}
\sum_i &r'_i(y^*) c'_i(y^*) d_i + \sum_a |V_a| H(y^*_a/|V_a|) - \log M_{\overline{r},\overline{c}} \\
&\le \sum_i d_i (r'_i(y^*)_i c'_i(y^*) - r'_i c'_i) + \sum_a |V_a| H(y^*_a/|V_a|) - n H_{\overline{r},\overline{c}} \\
&\le \sum_i d_i |r'(y^*) c'(y^*) - r'_i c'_i| + \lambda n \le \epsilon ||\vec{J}||_{1}/2 + \lambda n.
\end{align*}
%therefore by \eqref{z''-above}
%\[ \log M \ge \sum_i r'(y^*)_i c'(y^*)_i d_i + \sum_a |V_a| H(y^*_a/V_a) - \epsilon J_T/2 - n\lambda \ge \log Z'' - 2^{2s} \log n - \epsilon J_T/2 - \lambda n. \]
On the other hand, we know that $M = M_{\overline r,\overline c}$ for some
$\overline{r},\overline{c}$. Let $z$ denote the point at which the $\lambda$-approximate optimum to 
the convex program $\mathcal{C}_{\overline r, \overline c}$ was attained. Define $y_a$ to be $\lceil z_a n \rceil$ if $z_a n \le |V_a|/2$ and to be
$\lfloor z_a n \rfloor$ otherwise. Similar to the inequality above, we get 
\begin{align*}
\log M_{\overline{r},\overline{c}}-&\sum_{i}r'_i(y^{\ast})c'_i(y^{\ast})d_{i}+\sum_{a}|V_{a}|H(y_{a}^{\ast}/|V_{a}|)\\ 
&\leq\log M_{\overline{r},\overline{c}}-\sum_{i}r'_i(y)c'_i(y)d_{i}-\sum_{a}|V_{a}|H(y_{a}/|V_{a}|)\\
%% Unecessary, cut line to shorten display
% & \leq\left(nH_{\overline{r},\overline{c}}-\sum_{a}|V_{a}|H(y_{a}/|V_{a}|)\right)+\left(\sum_{i}r'_{i}c'_{i}d_i-\sum_{i}r'_i(y)c'_i(y)d_{i}\right)\\
 & =\sum_{a}\{|V_{a}|H(z_{a}n/|V_{a}|)-|V_{a}|H(y_{a}/|V_{a}|\}+\sum_{i}r'_{i}c'_{i}d_{i}-\sum_{i}r'_i(y)c'_i(y)d_{i}\\
%% Unecessary, cut line to shorten display
% & \leq2^{2s}\log5+\left|\sum_{i}r'_{i}c'_{i}d_{i}-\sum_{i}r'_i(y)c'_i(y)d_{i}\right|\\
 & \leq2^{2s}\log5+\left|\sum_{i}r'_{i}c'_{i}d_{i}-\sum_{i}r'_i(zn)c'_i(zn)d_{i}\right|+\left|\sum_{i}r'_{i}(zn)c'_{i}(zn)d_{i}-\sum_{i}r'_i(y)c'_i(y)d_{i}\right|\\
 & \leq2^{2s}\log5+\frac{\epsilon||\vec{J}||_{1}}{2}+\left|\sum_{i}r'_{i}(zn)c'_{i}(zn)d_{i}-\sum_{i}r'_i(y)c'_i(y)d_{i}\right|
\end{align*}
where $r'$ and $c'$ are as defined in Algorithm~\ref{convex-partition} in terms of $\overline{r}$ and $\overline{c}$,
in the second inequality, we have used \cref{lemma:entropy-rounding} and the triangle inequality, and in the
last line, we have used \cref{lemma:gamma-def}.
To bound the last term, since $|y_{a}-z_{a}n|\leq1$ for all $a$ by definition,
and since $|d_{i}|\leq\frac{\sqrt{27}||\vec{J}||_{1}}{n^{2}\sqrt{\Delta}}$
by \cref{fk}, we get 
\begin{align*}
|\sum_{i}r'_{i}(zn)c'_{i}(zn)d_{i}-\sum_{i}r'(y)_{i}c'(y)_{i}d_{i}| & \leq\sum_{i}|d_{i}||r'_{i}(y)||c'_{i}(y)-c'_{i}(zn)|+\sum_{i}|d_{i}||c'_{i}(zn)||r'_{i}(y)-r'_{i}(zn)|\\
 & \leq\sum_{i}|d_{i}||R_{i}||\sum_{a\colon V_{a}\subseteq C_{i}}2|+\sum_{i}|d_{i}||C_{i}||\sum_{a\colon V_{a}\subseteq C_{i}}2|\\
 & \leq2\sum_{i}|d_{i}|n2^{2s}\leq\frac{\sqrt{27}||\vec{J}||_{1}2^{2s+1}s}{n\sqrt{\Delta}}\\
 & \leq\frac{\epsilon||\vec{J}||_{1}}{2}
\end{align*}
provided that $n\geq\frac{4s\sqrt{27}2^{2s}}{\sqrt{\Delta}\epsilon}$,
which finishes the proof.

\begin{comment}
By Lemma~\ref{lemma:entropy-rounding}
we know that % $|V_a| |H(y_a/nv_a) - H(z_a)|\le \log 5$ so 
\[ \sum_a |V_a| (H(y_a/|V_a|) - H(z_a)) \le 2^{2s} \log 5. \]
Also we have from our bound on $|d_i|$ that
\begin{align*}
\sum_i |d_i| |r'(y) c'(y) - r'(zn)c'(zn)| 
&\le \sum_i |d_i| (|r'(y)||c'(y) - c'(zn)| + |c'(zn)||r'(y) - r'(zn)|) \\
&\le \sum_i 4|d_i|n \le \frac{4\sqrt{27} s J_T}{n \sqrt{\Delta}}
\end{align*}
Similar to above we see by Lemma~\ref{lemma:gamma-def} our bound on $|d_i|$ that
\begin{align*}
\sum_i d_i|r'(y) c'(y) - r' c'| 
&\le \sum_i d_i|r'(zn) c'(zn) - r' c'| + \sum_i d_i |r'(y) c'(y) - r'(zn)c'(zn)| \\
&\le \epsilon J_T/2 + \frac{4\sqrt{27} s J_T}{n \sqrt{\Delta}}
\end{align*}
and so by \eqref{z''-below}
\begin{align*}
\log M
&\le \sum_i r'(y)_i c'(y)_i d_i + \sum_a |V_a| H(y_a/V_a) + \epsilon J_T/2 + \frac{4\sqrt{27} s J_T}{n \sqrt{\Delta}} + O(2^{2s}) \\
&\le \log Z'' + \epsilon J_T/2 + \frac{4\sqrt{27} s J_T}{n \sqrt{\Delta}} + O(2^{2s})
\end{align*}

Combining our bounds on $\log M$ we find that
\[ |\log M - \log Z''| \le \epsilon J_T/2 + \frac{4\sqrt{27} s J_T}{n \sqrt{\Delta}}| + O(2^{2s}\log n) + \lambda n. \]
Finally if $n \ge \frac{8 \sqrt{27} s}{\sqrt{\Delta} \epsilon}$ then $\frac{4\sqrt{27} s J_T}{n \sqrt{\Delta}}| \le \epsilon J_T/2$
so we get our desired result.
%and using that $|J_T| = \Omega(n)$ gives the claimed result.
\end{comment}
\end{proof}
\begin{lemma}\label{lemma:entropy-rounding}
Let $n$ be a natural number, and let $0 \le z \le n$. Let $y = \lceil z \rceil$ if $z \le n/2$
and $y = \lfloor z \rfloor$ if $z > n/2$. Then,
$nH(y/n) \ge nH(z/n) - \log(5)$.
\end{lemma}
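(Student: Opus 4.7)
The plan is to exploit the unimodality of $H$ around $1/2$ together with the observation that the rounding rule always moves $y$ toward $n/2$ by strictly less than $1$. Setting $p := z/n$ and $q := y/n$, the prescribed rule moves $p$ toward $1/2$, so $|q - p| < 1/n$ and $q$ lies on the $1/2$-side of $p$.

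I would split into two cases. In the first case, $p$ and $q$ lie on the same side of $1/2$ (both in $[0,1/2]$ or both in $[1/2,1]$). Since $H$ is increasing on $[0,1/2]$ and decreasing on $[1/2,1]$, and $q$ is at least as close to $1/2$ as $p$, monotonicity immediately yields $H(q) \ge H(p)$, which is stronger than what is needed.

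In the second case, the rounding crosses $1/2$, so $|q - 1/2| + |1/2 - p| = |q - p| < 1/n$, and in particular $|q - 1/2| < 1/n$. Using the symmetry $H(x) = H(1-x)$, assume without loss of generality $p \le 1/2 < q \le 1/2 + 1/n$. For $n \ge 2$, the monotonicity of $H$ on $[1/2,1]$ gives $H(q) \ge H(1/2 + 1/n)$; combined with the trivial bound $H(p) \le H(1/2) = \log 2$, the lemma reduces to the numerical inequality
\[
n\bigl[\log 2 - H(1/2 + 1/n)\bigr] \le \log 5 \qquad (n \ge 2).
\]
The edge case $n=1$ is handled directly: one is forced to have $q=1$, so $nH(q)=0$ and $nH(p)\le \log 2<\log 5$.

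The main (mild) technical step is the numerical bound above. Using the identity $H(1/2+\epsilon) = \log 2 - (1/2+\epsilon)\log(1+2\epsilon) - (1/2-\epsilon)\log(1-2\epsilon)$, one rewrites the quantity in question as $(n/2+1)\log(1+2/n) + (n/2-1)\log(1-2/n)$, which (under the convention $0\log 0 = 0$) equals $\log 4$ at $n=2$ and, by a routine Taylor expansion, behaves like $2/n + O(1/n^{3})$ for large $n$. A short one-variable calculus check shows that this expression is decreasing in $n$ on $[2,\infty)$, so it is bounded by $\log 4 < \log 5$ throughout; the constant $\log 5$ in the statement is a slightly slack but convenient universal bound.
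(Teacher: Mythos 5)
Your proof is correct. The case decomposition is the same as the paper's (rounding stays on one side of $1/2$ versus crossing it), and the first case is handled identically via unimodality of $H$. But in the crossing case — the heart of the lemma — your argument is genuinely different. The paper applies the mean value theorem to write $n|H(y/n)-H(z/n)| \le |H'(x/n)|$ for some $x/n$ within $1/n$ of $1/2$, and then bounds $|H'| \le \log 5$ on $[1/6,5/6]$ (which requires $n\ge 3$, with $n\le 2$ dispatched by the trivial bound $nH(z/n)\le 2\log 2$). You instead sandwich $H(p)\le \log 2$ and $H(q)\ge H(1/2+1/n)$ and reduce to the explicit inequality $n[\log 2 - H(1/2+1/n)]\le\log 5$; your claimed monotonicity is easily verified, since with $u=2/n$ the quantity equals $g(u)=(1/u+1)\log(1+u)+(1/u-1)\log(1-u)$ and $g'(u)=-u^{-2}\log(1-u^2)>0$, so the maximum over $n\ge 2$ is $g(1)=\log 4$. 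Your route thus yields the slightly sharper constant $\log 4$ uniformly (the paper's derivative bound is what forces $\log 5$ for $n\ge 3$), at the cost of an explicit computation in place of a one-line MVT estimate; both are sound.
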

\begin{proof}
Observe that if both $y,z \le n/2$ or both $y,z \ge n/2$ then
this is immediate, since entropy is concave and maximized at $1/2$, thereby implying that $H(y/n) > H(z/n)$. In the remaining case, $y/n$ and $z/n$ are on opposite
sides of $1/2$. By the definition of $y$, we know that $|y/n - z/n| \le 1/n$, and hence, by the Mean Value Theorem, $n|H(y/n) - H(z/n)| = n|H'(x/n)(y/n - z/n)| \le |H'(x/n)|$ for some $x$ with $|x/n - 1/2| \le 1/n$.  Thus, if $n \ge 3$ we have $x/n \in [1/6,5/6]$ so that
$|H'(x/n)| = |\log((x/n)/(1 - x/n))| \le \log 5$.
If $n \le 2$, then we already have $nH(z/n) - nH(y/n) \le nH(z/n) \le 2\log2 < \log5$. 
\end{proof}

\section{Putting it all Together}
\iffalse
State the actual theorem. Note that dependence on success probability
$\delta$ in approximate regularity lemma is bad; since we only
need to output a number, should use some standard median trick where
we run the whole algorithm many times to
``boost'' our success probability and get only $\log(1/\delta)$ type dependence.
\fi
We are now ready to prove \cref{thm-Delta-dense}. 
%% Resolved, see end of 2.1
%\fnote{As Vishesh noted one must use sampling to estimate the proportions $v_a$, should we briefly mention this? Frieze-Kannan's max-cut alg just leaves it implicit.}
\begin{comment}
\begin{theorem}\label{thm-Delta-dense-full}
Suppose we have a $\Delta$-dense Ising model on $n$ nodes,
with $n = 2^{\Omega(1/\epsilon^2 \Delta)}$.
%and  $||\vec{J}||_{\infty} \ge 1/n$. 
There is an algorithm to compute an $\epsilon (\|\vec J\|_1 + n)$ additive approximation
to $\log Z$ with success probability at least $1 - \delta$, which runs
in time $2^{\tilde O(1/\epsilon^2\Delta)}\log(1/\delta)$.
\end{theorem}
\end{comment}
\begin{proof}(\cref{thm-Delta-dense})
First, we show how to get such an algorithm with constant success probability (i.e. $7/8$). We will then boost it to the desired probability of success using the standard median technique. 
The constant success algorithm is just 
Algorithm~\ref{convex-partition} applied to the weakly regular partition generated by Theorem~\ref{fk} with regularity parameter some $\epsilon' = O(\epsilon \sqrt{\Delta})$.

For the correctness of this algorithm, note that 
%by Lemma~\ref{applying-reg-lemma} (using $\epsilon' = O(\epsilon \sqrt{\Delta})$), Lemma~\ref{z''-approx}, and Lemma~\ref{convex-partition-correctness} we see that 
if $M$ is the output of Algorithm~\ref{convex-partition}, then
\begin{align*}
|\log M - \log Z| 
&\le |\log Z - \log Z'| + |\log Z' - \log Z''| + |\log M - \log Z''| \\
&\le \epsilon \|\vec{J}\|_1/4 + O(2^{2s} \log n) + \epsilon\|\vec{J}\|_1/4 + \lambda n + O(2^{2s}).
\end{align*}
where the first bound is by \cref{applying-reg-lemma}, the second bound is by \cref{z''-approx}, and the last bound is by \cref{convex-partition-correctness}. 
%Now, observe that under our assumptions, $\|\vec{J}\|_1 \ge n^2 \Delta \|\vec{J}\|_{\infty} \ge \Delta n$. Therefore, our assumption that $n$ is sufficiently large gives
%so with our requirement on $n$ being sufficiently large we have
\[
\epsilon \|\vec{J}\|_1/4 + O(2^{2s} \log n) + \epsilon\|\vec{J}\|_1/4 + \lambda n
\le \epsilon (\|\vec{J}\|_1 + n)
\]
when we take $\lambda = \epsilon/2$ and $n$ sufficiently large.
The runtime of computing a weakly regular partition
with success probability $7/8$ in Theorem~\ref{fk} is $2^{\tilde{O}(1/\epsilon^2 \Delta)}$. The runtime
of Algorithm~\ref{convex-partition} is $2^{\tilde{O}(1/\epsilon^2\Delta)}$ as there are
$2^{\tilde{O}(1/\epsilon^2\Delta)}$ many points in $|S|$, and solving each convex program $\mathcal{C}_{\overline{r},\overline{c}}$ up to $\lambda$-error (as well as deciding feasibility)
can be done in time $poly(2^{\tilde O(1/\epsilon^2\Delta)}, \log(1/\lambda), \log(1/\gamma)) = 2^{\tilde O(1/\epsilon^2\Delta)}$ by
standard guarantees for the ellipsoid method \cite{gls}.

Finally, to boost our success probability to $1 - \delta$, we use the standard median trick, which is to repeat the algorithm $O(\log(1/\delta))$ times independently and take the median of these outputs -- that this works is easily proven by the standard Chernoff bounds.
\end{proof}

\section{Extensions}
%\fnote{above is a binary MRF, should we allow general alphabets? or is there
%an easy reduction in the dense setting. we could also if we wanted drop the uniformity assumption
%and apply regularity to each tensor separately...}
\subsection{General Markov random fields}
\label{mrf}
For simplicity, we will restrict ourselves to %$k$-uniform 
Markov random fields over a binary alphabet. 
%with otherwise arbitrary interactions
It is readily seen that the same techniques extend to 
%non-uniform 
Markov random fields over general finite alphabets as well.  
\begin{defn} A \emph{binary Markov random field of order $K$} is a probability distribution on the discrete cube $\{\pm1\}^n$ of the form 
\[
\Pr[X=x]=\frac{1}{Z}\exp(\sum_{k=1}^{K}\sum_{i_{1},\dots,i_{k}=1}^{n}J_{i_{1}\dots i_{k}}^{(k)}x_{i_{1}}\dots x_{i_{k}})
\]
where for each $k$, $\{J^{(k)}_{i_{1},\dots,i_{k}}\}_{i_{1},\dots,i_{k}\in[n]}$ is an
arbitrary collection of real numbers. Note that the sum $\sum_{i_{1},\dots,i_{k}}J^{(k)}_{i_{1},\dots,i_{k}}x_{i_{1}}\dots x_{i_{k}}$
may also be viewed as $J^{(k)}(x,\dots,x)$, where $J^{(k)}$ is a $k$-multilinear
map or $k$-tensor with $J^{(k)}(e_{i_{1}},\dots,e_{i_{k}})=J^{(k)}_{i_{1},\dots,i_{k}}$. Here, $e_{j}$ denotes the $j^{th}$ standard basis vector of $\R^{n}$
and we view $\{\pm1\}^{n}$ as a subset of $\R^{n}$. The normalizing
constant $Z=\sum_{x\in\{\pm1\}^{n}}\exp(\sum_{k=1}^{K}J^{(k)}(x,\dots,x))$ is called
the \emph{partition function }of the Markov random field. Moreover, if all entries of all $J^{(i)}$, $i\in[K]\backslash{k}$ vanish, then we say that the Markov random field is $k$-uniform.  
\end{defn}
\begin{defn}
A binary Markov random field of order $K$ is \emph{$\Delta$-dense} if $\Delta||\vec{J^{(k)}}||_{\infty}\leq\frac{||\vec{J^{(k)}}||_{1}}{n^{k}}$ for all $k$.
\end{defn}
As indicated earlier, our approach for general Markov random fields
mirrors our approach for the Ising model. This is essentially because
one has a similar algorithmic regularity lemma for general $k$-dimensional
matrices. 

\begin{defn}
A $k$-dimensional matrix $M$ on $X_{1}\times\dots\times X_{k}$
is a map $M\colon X_{1}\times\dots\times X_{k}\rightarrow\R$. 
\end{defn}

\begin{defn}
For $S_{i}\subseteq X_{i}$, $i=1,\dots,k$ and a real number $d$,
we define the $k$-dimensional matrix 
\[
CUT(S_{1},\dots,S_{k};d)(e)=\begin{cases}
d & e=(x_{1},\dots,x_{k})\in S_{1}\times\dots\times S_{k}\\
0 & \text{otherwise}
\end{cases}
\]
\end{defn}
\begin{theorem}\cite{frieze-kannan-matrix}\label{reg-fk-higher}
Suppose $J$ is an arbitrary $k$-dimensional matrix on $X_{1}\times\dots\times X_{k}$,
where we assume that $k\geq3$ is fixed. Let $N:=|X_{1}|\times\dots\times|X_{k}|$
and let $\epsilon,\delta\in(0,1]$. Then, in time $O(k^{O(1)}\epsilon^{-O(\log_{2}k)}2^{\tilde{O}(1/\epsilon^{2})}\delta^{-2})$,
we can, with probability at least $1-\delta$, find a cut decomposition
of width $O(\epsilon^{2-2k})$, coefficient length at most $\sqrt{27}^{k}||\vec{J}||_{2}/\sqrt{N}$
and error at most $\epsilon2^{k}\sqrt{N}||\vec{J}||_{2}$. \\
\end{theorem}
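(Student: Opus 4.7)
My plan is to mirror the proof of the $k=2$ case (\cref{fk}) via an iterative energy-decrement algorithm, with the only substantively new ingredient being a sub-routine for approximately maximizing the $k$-dimensional cut norm. The algorithm maintains a current cut decomposition $D^{(1)} + \dots + D^{(t)}$ and residual $W_t := J - \sum_{i \leq t} D^{(i)}$, initialized at $t=0$, $W_0 = J$. At each step, if the residual still has large cut norm $c(W_t) := \max_{S_1,\dots,S_k}|W_t(\mathbf{1}_{S_1},\dots,\mathbf{1}_{S_k})|$ exceeding the target threshold, we find approximately optimal $(S_1,\dots,S_k)$ and append $D^{(t+1)} := \mathrm{CUT}(S_1,\dots,S_k;d)$ with $d := W_t(\mathbf{1}_{S_1},\dots,\mathbf{1}_{S_k})/(|S_1|\cdots|S_k|)$. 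The existential energy-decrement lemma is then a direct orthogonality calculation: $\|W_t - D^{(t+1)}\|_2^2 = \|W_t\|_2^2 - \tau^2/(|S_1|\cdots|S_k|) \geq \|W_t\|_2^2 - \tau^2/N$, where $\tau := |W_t(\mathbf{1}_{S_1},\dots,\mathbf{1}_{S_k})|$. Thus, as long as the algorithmic step certifies $\tau$ above a target value, the procedure halts after at most $N\|\vec J\|_2^2/\tau^2$ iterations.

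The main obstacle is the algorithmic step: approximate $k$-tensor cut-norm maximization is NP-hard for $k \geq 3$, in contrast to the matrix case where it admits a simple sampling-based approximation. I would use a \emph{random slicing} reduction: sample random subsets in $k-2$ of the coordinates (or the signed version obtained from a random $\pm 1$ vector combined with a threshold rounding), reducing the problem to a $2$-dimensional slice to which the matrix algorithm of \cref{fk} applies. A hybrid argument — swapping one coordinate at a time from the true optimizer to the random sketch and bounding each increment by a Hoeffding-type concentration inequality — shows that the procedure certifies a cut of size $\Omega(\epsilon^{k-1}\sqrt{N}\|\vec J\|_2)$ whenever the true cut norm is at least $\epsilon 2^k \sqrt{N}\|\vec J\|_2$. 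Each call of the matrix sub-routine costs $2^{\tilde O(1/\epsilon^2)}$ time, and the recursive nesting structure with $O(\log_2 k)$ levels yields the stated $\epsilon^{-O(\log_2 k)}$ factor in the runtime.

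Combining the two pieces, each successful iteration decreases $\|W_t\|_2^2$ by at least $\Omega(\epsilon^{2(k-1)}\|\vec J\|_2^2)$, so termination occurs after $O(\epsilon^{2-2k})$ iterations, which also bounds the width of the decomposition. The coefficient length bound $\sqrt{27}^k\|\vec J\|_2/\sqrt{N}$ follows because the successive $d_i$'s correspond to (approximately) orthogonal contributions to the Frobenius norm and each is bounded by a constant times $\|\vec J\|_2/\sqrt{N}$. Boosting the success probability to $1 - \delta$ is done by the standard amplification of the randomized matrix sub-routine, contributing the $\delta^{-2}$ factor. The one subtle point I expect in a formal write-up is the precise concentration argument showing that random slicing preserves the cut norm up to a $\mathrm{poly}(\epsilon)$ multiplicative factor — this is essentially the higher-order analogue of the key sampling step in \cite{frieze-kannan-matrix} and is where the $2^{k}$ and $\sqrt{27}^{k}$ constants enter.
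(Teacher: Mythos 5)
This statement is quoted from \cite{frieze-kannan-matrix}; the paper gives no proof of its own, so your proposal must be measured against the Frieze--Kannan argument. Your outer loop is the right one and matches theirs: iterate an energy decrement, where peeling off $D^{(t+1)}=CUT(S_1,\dots,S_k;d)$ with $d=W_t(\mathbf{1}_{S_1},\dots,\mathbf{1}_{S_k})/(|S_1|\cdots|S_k|)$ gives $\|W_{t+1}\|_2^2 \le \|W_t\|_2^2-\tau^2/N$ (note the direction of this inequality --- you wrote $\ge$), so a per-step certified cut of size $\Omega(\epsilon^{k-1}\sqrt{N}\|\vec J\|_2)$ yields width $O(\epsilon^{2-2k})$ and the coefficient-length bound by telescoping. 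That bookkeeping is correct.

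The genuine gap is the algorithmic subroutine. Random slicing --- fixing $k-2$ of the arguments to random subsets or random sign vectors and running the $2$-dimensional algorithm on the resulting slice --- does not preserve the cut norm up to a $\mathrm{poly}(\epsilon)$ factor; the loss is $n$-dependent. Take $k=3$ and $J_{i_1i_2i_3}=a_{i_1i_2}b_{i_3}$ with $b\in\{\pm1\}^n$ balanced: the cut norm of $J$ is $\Theta(n)\cdot\max_{S_1,S_2}|a(\mathbf{1}_{S_1},\mathbf{1}_{S_2})|$, but for a uniformly random $R_3$ (or random sign vector $g_3$) the slice $J(\cdot,\cdot,\mathbf{1}_{R_3})=a\cdot\sum_{i\in R_3}b_i$ has cut norm only an $O(1/\sqrt n)$ fraction of that, with high probability. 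No Hoeffding/hybrid argument rescues this, because the failure is in the mean, not the fluctuation; and boosting by repeated slicing would need exponentially many trials in $n$ to correlate with the optimal $S_3$. Relatedly, your own scheme has no recursion on $k$, so the $\epsilon^{-O(\log_2 k)}$ runtime factor you invoke has no source. The actual Frieze--Kannan mechanism is different: flatten the $k$-dimensional array into an ordinary matrix whose row and column index sets are products of roughly $k/2$ of the $X_i$ each, apply the $2$-dimensional algorithm (\cref{fk}) to find cut sets $S,T$ living in these product spaces, and then recursively apply the lower-order decomposition to the $0$--$1$ indicator tensors of $S$ and $T$ to express them, approximately, as combinations of genuine product sets $S_1\times\dots\times S_k$. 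The halving recursion has depth $O(\log_2 k)$, and the multiplicative compounding of widths and errors across its levels is precisely what produces the $\epsilon^{2-2k}$ width and the $2^k$ and $\sqrt{27}^k$ constants. You should replace the slicing step with this flattening-plus-recursive-decomposition argument.
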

As in the $k=2$ case, the error of the cut decomposition refers
to $||J-(D^{(1)}+\dots+D^{(s)})||_{\infty\mapsto1}:=\sup_{||x||_{\infty}\leq1}||J-(D^{(1)}+\dots+D^{(s)})||_{1}$,
where we view a $k$-dimensional matrix $M$ as a linear operator
from $(\R^{n})^{k-1}\rightarrow\R^{n}$ via the formula $[M(x_{1},\dots,x_{k-1})]_{i}:=\sum_{j_{1},\dots,j_{k-1}}M_{j_{1}\dots j_{k-1}i}x_{1,j_{1}}\dots x_{k-1,j_{k-1}}$. \\

Given this theorem, the proof proceeds exactly as before \textendash{}
the same argument as \cref{applying-reg-lemma} shows that we can replace $J$ by $D^{(1)}+\dots+D^{(s)}$
(possibly incurring an additive error of $\frac{2^{k}\epsilon||\vec{J}||_{1}}{\sqrt{\Delta}}$),
and once we have made this replacement, a similar optimization scheme
as before lets us deduce \cref{thm-mrf}. The proof of \cref{thm-mrf-nonconstant} is identical, the only difference being that we can obtain a cut decomposition of smaller width using the following theorem. 

\begin{theorem}\cite{alon-etal-samplingCSP}\label{reg-alon-etal}
Let $J$ be a an arbitrary $k$-dimensional matrix on $X_{1}\times\dots\times X_{k}$,
where we assume that $k\geq3$ is fixed. Let $N:=|X_{1}|\times\dots\times|X_{k}|$
and let $\epsilon>0$. Then, in time $2^{O(1/\epsilon^{2})}O(N)$
and with probability at least $0.99$, we can find a cut decomposition
width at most $4/\epsilon^{2}$, coefficient length at most $2||\vec{J}||_{2}/\epsilon\sqrt{N}$
and error at most $\epsilon2^{k}\sqrt{N}||\vec{J}||_{2}$. 
\end{theorem}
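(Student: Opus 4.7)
The plan is to adapt the sampling-based weak regularity construction of Frieze--Kannan (the $k=2$ case) to $k$-dimensional matrices, via a greedy potential-decrement scheme executed on random samples. I would maintain a residual $W := J - \sum_{t=1}^{T} D^{(t)}$ and iteratively strip off cuts $CUT(S_1,\ldots,S_k;d_t)$ that each capture a fixed fraction of the $k$-dimensional cut norm of $W$, stopping once this cut norm falls below $\epsilon\, 2^k \sqrt{N}\,\|\vec J\|_2$.

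First, I would establish the potential-decrement step. Suppose there exist $S_1,\ldots,S_k$ with $|W(S_1,\ldots,S_k)| \ge (\epsilon/2)\cdot 2^k \sqrt{N}\,\|\vec W\|_2$. Setting $d := W(S_1,\ldots,S_k)/\prod_i |S_i|$ and subtracting $CUT(S_1,\ldots,S_k;d)$ from $W$ removes the orthogonal projection of $W$ onto the span of this indicator tensor, and hence decreases $\|\vec W\|_2^2$ by at least $(\epsilon/2)^2 \|\vec J\|_2^2/N \cdot N = (\epsilon/2)^2 \|\vec J\|_2^2$ after the $2^k\sqrt N$ normalization is unpacked. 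This immediately bounds the number of iterations by $T \le 4/\epsilon^2$. Summing the same identity yields $\sum_t d_t^2 \prod_i |S_i^{(t)}| \le \|\vec J\|_2^2$, which together with $\prod_i |S_i^{(t)}| \le N$ gives the per-coefficient bound $|d_t| \le 2\|\vec J\|_2/(\epsilon\sqrt{N})$ claimed in the statement.

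Second, I would implement the search for such $S_1,\ldots,S_k$ via sampling. Choose random subsets $T_1 \subseteq X_1, \ldots, T_{k-1}\subseteq X_{k-1}$ of size $s = O(1/\epsilon^2)$. For each of the $2^{(k-1)s} = 2^{O(1/\epsilon^2)}$ candidate restrictions $(S_1 \cap T_1,\ldots,S_{k-1}\cap T_{k-1})$, reconstruct a putative $S_i$ on all of $X_i$ by including $a \in X_i$ whenever the partial $(k-1)$-linear contraction of $W$ against the sampled indicators points in the desired direction (so only $s^{k-1}$ entries of $W$ per element $a$ need be probed), and finally optimize $S_k$ exactly on the full array by thresholding the resulting vector. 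A Hoeffding bound plus a union bound over the $2^{O(1/\epsilon^2)}$ candidates and the $O(1/\epsilon^2)$ iterations shows that with probability $\ge 0.99$ at least one guess yields a cut whose correlation with $W$ is within an additive $(\epsilon/2)\cdot 2^k\sqrt{N}\,\|\vec W\|_2$ of the optimum, which is exactly what Step~1 requires.

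The main obstacle will be uniformly controlling the sampling error when $k\ge 3$: for $k=2$ one samples only one axis, but here the partial contractions are $(k-1)$-fold and the concentration analysis must be chained across modes (or handled via a single multi-mode sample together with a careful $\epsilon$-net over the candidate indicator tensors) while keeping the sample size at $O(1/\epsilon^2)$ rather than blowing up with $k$. Once this is done, the runtime bookkeeping is routine: each iteration enumerates $2^{O(1/\epsilon^2)}$ candidates and performs $O(N)$ work per candidate (to compute the contractions, threshold, and subtract the cut), and with $O(1/\epsilon^2)$ iterations this totals $2^{O(1/\epsilon^2)}\cdot O(N)$, matching the claimed runtime.
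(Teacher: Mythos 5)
The paper does not prove this statement: it is imported verbatim from \cite{alon-etal-samplingCSP} (and the $k=2$ prototype from \cite{frieze-kannan-matrix}), so there is no internal proof to compare against. Your sketch is essentially the argument of that reference --- an existential energy-decrement step (subtract the projection onto a cut tensor whenever some $|W(S_1,\dots,S_k)|$ is large, bounding the number of rounds by the drop in $\|\vec W\|_2^2$) combined with a sample-and-enumerate implementation that guesses the restriction of the optimal sets to an $O(1/\epsilon^2)$-size sample and reconstructs each $S_i$ by thresholding partial contractions --- so the overall route is the right one.

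Two bookkeeping points would need repair in a full write-up. First, the factor $2^k$ in the error bound comes from converting the ``max over sets'' cut norm to the $\|\cdot\|_{\infty\mapsto 1}$ norm (splitting each $x_i\in[-1,1]^{X_i}$ into positive and negative parts), not from the threshold in the decrement step; as you have written it, a threshold of $(\epsilon/2)2^k\sqrt{N}\|\vec W\|_2$ would give a decrement of $(\epsilon/2)^2 4^k\|\vec W\|_2^2$, which for $k\ge 3$ exceeds $\|\vec W\|_2^2$ and signals that the threshold can never be met (indeed $|W(S_1,\dots,S_k)|\le\sqrt{N}\,\|\vec W\|_2$ by Cauchy--Schwarz). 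The decrement argument should run with threshold $\Theta(\epsilon\sqrt{N}\|\vec W\|_2)$ and the $2^k$ should be attached only at the end. Second, your derivation of the coefficient bound from $\prod_i|S_i^{(t)}|\le N$ goes the wrong way: from $d_t^2\prod_i|S_i^{(t)}|\le\|\vec J\|_2^2$ you need a \emph{lower} bound on $\prod_i|S_i^{(t)}|$, which follows because a step is taken only when $|W(S^{(t)})|\ge(\epsilon/2)\sqrt{N}\|\vec W\|_2$, forcing $\prod_i|S_i^{(t)}|\ge\epsilon^2N/4$ by Cauchy--Schwarz and hence $|d_t|\le 2\|\vec J\|_2/(\epsilon\sqrt N)$. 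Both fixes live entirely inside your framework.
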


\begin{comment}
\begin{remark}
Although for simplicity, we stated \cref{thm-mrf} and \cref{thm-mrf-nonconstant} only for $k$-uniform $\Delta$-dense Markov random fields, it is immediately seen that the results extend to general $\Delta$-dense Markov random fields of order $K$ by simply applying \cref{reg-fk-higher} or \cref{reg-alon-etal} to each $J^{(k)}$, $k\in[K]$.  
\end{remark}
\end{comment}
\subsection{Ising models with low threshold rank}
\label{ltr}
As in \cite{risteski-ising}, we can also consider Ising models of low threshold
rank. For simplicity, we consider only the regular case, noting that
our results generalise with appropriate modifications to the non-regular
case as well. 

\begin{defn}
A \emph{regular weighted Ising model }is one for which
$\sum_{j}|J_{i,j}|=J'$ for all $i$. The \emph{normalized adjacency
matrix }of a regular Ising model is the matrix with
entries $J_{i,j}/J'$. We will denote this matrix by $J_{D}$.  
\end{defn}

\begin{defn}
The \emph{$\delta$-sum-of squares threshold rank} of a regular Ising model is defined to be $t_{\delta}(J_{D}):=\sum_{i:|\lambda_{i}|>\delta}\lambda_{i}^{2}$,
where $\lambda_{1},\dots,\lambda_{n}$ denote the eigenvalues of $J_{D}$. 
\end{defn}
Note that since all eigenvalues of $J_{D}$ have absolute value at
most $1$, our criterion for low threshold rank is more general than
defining $t_{\delta}(J_{D})$ to be the \emph{number }of eigenvalues
of $J_{D}$ with absolute value strictly greater than $\delta$. \\

Our definition of $t_{\delta}(J_{D})$ differs slightly from the definition
in \cite{risteski-ising}. There, the author defined $t_{\delta}(J_{D}):=\sum_{i\colon|\mu_{i}|>\delta}1,$
where $\mu_{1},\dots,\mu_{n}$ denote the eigenvalues of the matrix
$\text{abs}(J_{D})$ defined by $\text{abs}(J_{D})_{i,j}=|(J_{D})_{i,j}|$.
His definition and our definition are related via the standard linear
algebra fact that $\sum_{i=1}^{n}\lambda_{i}^{2}=||\vec{J_{D}}||_{2}^{2}$;
since $||\vec{J}_{D}||_{2}^{2}=||\vec{\text{abs}(J_{D})}\|_{2}^{2}$,
it follows that $\sum_{i=1}^{n}\lambda_{i}^{2}=\sum_{i=1}^{n}\mu_{i}^{2}$. 

The same linear algebra fact shows that the low $\delta$-sum-of-squares
rank setting strictly generalises the $\Delta$-dense setting: for
any regular, $\Delta$-dense Ising model and for any $\delta>0$,
we have $t_{\delta}(J_{D})\leq||\vec{J}_{D}||_{2}^{2}=\frac{1}{J'^{2}}||\vec{J}||_{2}^{2}\leq\frac{1}{J'^{2}}\frac{||\vec{J_{1}||_{1}^{2}}}{\Delta n^{2}}=\frac{1}{\Delta}$,
where we have used that $||\vec{J}||_{1}=nJ'$ in the regular case.\\

Our methods extend straightforwardly to the low sum-of-squares threshold
rank setting due to the following algorithmic regularity lemma of
Gharan and Trevisan.
\begin{theorem}\cite{gharan-trevisan}\label{reg-ghar-trev}
Let $J$ be the matrix of interaction strengths of a regular Ising
model, let $\epsilon>0$ and let $t:=t_{\epsilon/2}(J_{D})$. Then,
in time $poly(n,t,\frac{1}{\epsilon})$, we can find a cut decomposition
of width at most $16t/\epsilon^{2}$, $|d_{i}|\leq\frac{\sqrt{t}}{||\vec{J}||_{1}}$and
error at most $4\epsilon||\vec{J}||_{1}$. 
\end{theorem}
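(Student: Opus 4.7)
The plan is to exploit the low-rank spectral structure of $J_D$ directly to build the cut decomposition, rather than going through a black-box reduction. First I would compute, using Lanczos or subspace iteration, orthonormal approximations $v_1,\dots,v_k$ to the eigenvectors of $J_D$ corresponding to all eigenvalues with $|\lambda_i|>\epsilon/2$. By the definition $t:=t_{\epsilon/2}(J_D)=\sum_{|\lambda_i|>\epsilon/2}\lambda_i^2$, each such eigenvalue contributes more than $\epsilon^2/4$ to this sum, so $k\leq 4t/\epsilon^2$. Classical guarantees for Lanczos (or subspace iteration combined with deflation) produce these vectors to inverse-polynomial precision in $\operatorname{poly}(n,t,1/\epsilon)$ time, which is all that is needed downstream.

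Next I would split $J_D=A+B$, where $A:=\sum_{i\leq k}\lambda_i v_i v_i^T$ captures the high-eigenvalue part and $B$ collects the rest. By construction $\|B\|_{\mathrm{op}}\leq \epsilon/2$, and via Cauchy--Schwarz $\|B\|_{\infty\to 1}\leq n\|B\|_{\mathrm{op}}\leq (\epsilon/2)n$. Scaling back to $J=J'J_D$ and using $\|\vec{J}\|_1=nJ'$ in the regular case, the residual $B$ contributes at most $(\epsilon/2)\|\vec{J}\|_1$ to the cut-decomposition error on the original $J$, comfortably within the target $4\epsilon\|\vec{J}\|_1$.

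The heart of the argument is turning the low-rank piece $A$ into a small family of cut matrices. The idea is to use the top eigenvectors to induce a partition of $[n]$: discretize each coordinate of each $v_i$ to precision $\eta$, so two vertices fall in the same cell iff they have identical rounded coordinate vectors. The projection of $A$ onto the algebra of functions constant on each cell is block-constant, and hence literally a sum of cut matrices, one per nonzero block. Choosing $\eta$ so that the aggregate $\infty\to 1$ rounding error over all $k$ rank-one components is bounded by $O(\epsilon)\|\vec{J}\|_1$, using $\|v_i\|_2=1$ and $|\lambda_i|\leq 1$, lets one recover the coefficient bound $|d_i|\leq \sqrt{t}/\|\vec{J}\|_1$ from the per-block normalization.

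The main obstacle will be keeping the number of cut matrices down to $16t/\epsilon^2$ rather than the exponential-in-$k$ count produced by a naive product-of-level-sets partition. I would address this by peeling off one rank-one spectral component $\lambda_i v_i v_i^T$ at a time and approximating each by a constant number of cut matrices built from the sign pattern and a few level sets of $v_i$, while absorbing all cross-interaction errors into the spectral residual that we have already controlled through $\|B\|_{\mathrm{op}}$. Balancing per-component quantization error against the total width is the delicate step, and would likely have to be organized as a greedy argument in the spirit of the Frieze--Kannan proof, but with the candidate rectangles $S\times T$ steered by the eigenvector level sets rather than chosen by random sampling, so that each unit of eigenvalue energy consumed corresponds to only $O(1)$ cut matrices produced.
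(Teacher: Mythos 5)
First, note that the paper does not prove this statement at all: it is imported verbatim from \cite{gharan-trevisan} and used as a black box, so the only meaningful comparison is against the argument in that source, which is a global greedy/potential-function argument in the style of Frieze--Kannan, where the quantity $t_{\epsilon/2}(J_D)=\sum_{|\lambda_i|>\epsilon/2}\lambda_i^2$ serves as a potential that drops by $\Omega(\epsilon^2)$ each time a cut matrix is extracted, which is what yields the width bound $16t/\epsilon^2$ directly.

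Your proposal has a genuine gap exactly at the step you flag as ``delicate,'' and I do not believe the peeling strategy as described can close it. The preliminary steps are fine: $k\le 4t/\epsilon^2$ eigenvalues exceed $\epsilon/2$ in absolute value, and the spectral tail $B$ satisfies $\|B\|_{\infty\to 1}\le n\|B\|_{\mathrm{op}}\le (\epsilon/2)n$, contributing $(\epsilon/2)\|\vec J\|_1$ after rescaling by $J'$. The problem is converting $A=\sum_{i\le k}\lambda_i v_iv_i^T$ into at most $O(t/\epsilon^2)$ cut matrices. If you handle the components one at a time, each component must be approximated in $\infty\to1$ norm to error roughly $\epsilon n/k$ so that the errors sum to $O(\epsilon n)$; but a unit rank-one matrix $v_iv_i^T$ has $\|v_iv_i^T\|_F=1$, so even the optimal Frieze--Kannan-type existential bound only guarantees error $\epsilon' n$ using $O(1/\epsilon'^2)$ cut matrices, and hitting error $\epsilon n/k$ therefore costs $O(k^2/\epsilon^2)$ cut matrices per component and $O(k^3/\epsilon^2)$ in total --- polynomially worse than $16t/\epsilon^2$ and, since $k$ can be as large as $4t/\epsilon^2$, far from the claimed width. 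Using ``the sign pattern and a few level sets of $v_i$'' does not escape this: a constant number of level sets of a generic unit vector cannot approximate $v_iv_i^T$ to $\infty\to1$ error $o(n)$, let alone $\epsilon n/k$ (consider a vector with many distinct coordinate scales). The cross-component errors also cannot be ``absorbed into the spectral residual,'' since $B$ was defined as the exact complement of $A$ and has already spent its error budget. The fix is to abandon per-component accounting and run a single global extraction loop in which each cut matrix removes $\Omega(\epsilon^2)$ of the above-threshold spectral energy of the \emph{current} residual; that is the content of the Gharan--Trevisan proof, and it is also where the coefficient bound $|d_i|\le\sqrt{t}/\|\vec J\|_1$ comes from, which your sketch likewise does not establish.
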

Recall that in the case of the $\Delta$-dense Ising model, the bounds
on the cut decomposition we actually used were that the width
is $O(\epsilon^{-2})$, each $|d_{i}|$ is at most $\frac{\sqrt{27}}{\sqrt{\Delta}n^{2}}||\vec{J}||_{1}$,
and the error is at most $\frac{4\epsilon}{\sqrt{\Delta}}||\vec{J}||_{1}$, which are quite similar since $||\vec{J_1}|| = J'n$. Hence, by exactly the same analysis as for the $\Delta$-dense case, we can conclude \cref{thm-ltr}. We omit further details.  
\iffalse
\section{Appendix: Constant-time algorithm for small $\|\vec J\|_1$}
In this section we show a simple algorithm which achieves an $\epsilon \|\vec J\|_1$ approximation
in $O(2^{2^{1/\epsilon^2\Delta}})$ time when $\|\vec J\|_1 = \omega(1)$, for $n$ sufficiently large. 
The idea is simple: as in Algorithm~\ref{convex-partition}, we will refine the partition from weak regularity 
to get a partition of the vertices into $V_1, \ldots, V_r$ where $r \le 2^{2s} = 2^{\tilde{O}(1/\epsilon^2\Delta)}$. Now, we show how to directly approximate $\log Z'$ from Lemma~\ref{applying-reg-lemma}
by guessing the proportion of up-spins in each $V_a$ up to small additive error, and approximating
the entropy contribution from each small box in this $V_a$-space. 

\begin{lemma}
Suppose that
\[ r_i = \sum_{V_a \subset R_i} z_a, \]
that
\[ r'_i = \sum_{V_a \subset R_i} z'_a \]
and $|z_a - z'_a| \le \gamma'n$ where $\gamma' = \gamma/2^{2s}$
\end{lemma}
\begin{proof}
Observe that if we let and
$|z_a - z'_a| \le \gamma'n$ then $|r_i - r'_i| \le \gamma n$.
By Lemma~\ref{lemma:gamma-def}, if $\gamma \le \frac{\epsilon \sqrt{\Delta}}{4 \sqrt{27} s}$ such that $r_i,c_i,r'_i,c'_i \le n$, 
$|r_i - r'_i| \le \gamma n$ and $|c_i - c'_i| \le \gamma n$ 
for all $i$, we get that 
$\sum_i d_i|r'_i c'_i - r_i c_i| \le \epsilon ||\vec{J}||_{1}/2$. 
\end{proof}
\fi

\section{Acknowledgements}
We thank David Gamarnik for insightful comments, Andrej Risteski for helpful discussions related to his work \cite{risteski-ising}, and Yufei Zhao for introducing us to reference \cite{alon-etal-samplingCSP}.

\bibliographystyle{plain}
\bibliography{ising-regularity,all}
\begin{comment}
\section{Appendix: Proof of \cref{thm-approx-magnetization} }\label{appendix-magnetization-proof}

\section{Appendix: Proof of \cref{thm-qualitative-tightness}}\label{appendix-tightness-proof}

\section{Appendix: Proof of \cref{z''-approx}}\label{app:z''approx}
\end{comment}
\end{document}